\documentclass{article}

\PassOptionsToPackage{numbers, compress}{natbib}

\usepackage[preprint]{neurips_2026}

\usepackage[utf8]{inputenc} 
\usepackage[T1]{fontenc}    
\usepackage{url}            
\usepackage{booktabs}       
\usepackage{makecell}       
\usepackage{multirow}       
\usepackage{arydshln}       
\usepackage{algorithm}      
\usepackage{algpseudocode}  
\usepackage{graphicx}       
\usepackage{wrapfig}        
\usepackage{amsmath}        
\usepackage{amsfonts}       
\usepackage{amsthm}
\usepackage{nicefrac}       
\usepackage{microtype}      
\usepackage[table]{xcolor}  
\usepackage{listings}       
\usepackage[most]{tcolorbox}
\usepackage{hyperref}       

\definecolor{tableheader}{HTML}{F4F8FC}
\definecolor{tablesubheader}{HTML}{FBFCFE}
\definecolor{tablesection}{HTML}{EEF2F5}
\definecolor{tablehighlight}{HTML}{CFE8FF}
\definecolor{tableblue}{HTML}{1E5D96}
\definecolor{algobox}{HTML}{F3F5F7}
\definecolor{algoframe}{HTML}{D8DEE6}
\definecolor{toprankone}{HTML}{7DB8FF}
\definecolor{topranktwo}{HTML}{B7DAFF}
\definecolor{toprankthree}{HTML}{E0F0FF}
\definecolor{deltagreen}{HTML}{18864B}
\definecolor{deltared}{HTML}{B42318}
\definecolor{deltagray}{HTML}{667085}
\definecolor{promptbg}{HTML}{F3FAFF}
\definecolor{promptframe}{HTML}{9DC7E8}
\definecolor{prompttitlebg}{HTML}{E1F0FB}
\newcommand{\rankbox}[2]{\begingroup\setlength{\fboxsep}{1.1pt}\colorbox{#1}{\strut #2}\endgroup}
\newcommand{\rankone}[1]{\rankbox{toprankone}{\textbf{#1}}}
\newcommand{\ranktwo}[1]{\rankbox{topranktwo}{\textbf{#1}}}

\newcommand{\ourscell}[1]{\cellcolor{tablehighlight}\textbf{#1}}
\newcommand{\oursval}[1]{\cellcolor{tablehighlight}#1}

\newcommand{\methodref}[2]{#1~{\scriptsize\citep{#2}}}
\newcommand{\tablesectionrow}[2]{\rowcolor{tablesection}\multicolumn{#1}{:l:}{\textbf{\textcolor{tableblue}{#2}}}\\\hdashline}
\newcommand{\ablgood}[1]{\textcolor{deltagreen}{#1}}
\newcommand{\ablbad}[1]{\textcolor{deltared}{#1}}
\newcommand{\ablzero}[1]{\textcolor{deltagray}{#1}}
\newcommand{\ablcell}[2]{\begin{tabular}[c]{@{}c@{}}#1\\[-2pt]{\tiny #2}\end{tabular}}
\newtheorem{definition}{Definition}
\newtheorem{lemma}{Lemma}
\newtheorem{theorem}{Theorem}
\newtheorem{corollary}{Corollary}
\lstdefinestyle{promptstyle}{
  basicstyle=\ttfamily\scriptsize,
  columns=fullflexible,
  breaklines=true,
  breakatwhitespace=false,
  keepspaces=true,
  showstringspaces=false
}
\newtcblisting{promptbox}[1]{
  enhanced,
  breakable,
  listing only,
  colback=promptbg,
  colframe=promptframe,
  boxrule=0.55pt,
  arc=2pt,
  left=5pt,
  right=5pt,
  top=3pt,
  bottom=3pt,
  title={#1},
  coltitle=tableblue,
  fonttitle=\bfseries\small,
  boxed title style={colback=prompttitlebg,colframe=promptframe,boxrule=0.45pt,arc=2pt},
  attach boxed title to top left={xshift=5pt,yshift=-1.5mm},
  listing options={style=promptstyle}
}

\title{TRACER: Turn-level Regret Matching with Inner Reinforcement Credit for Cooperative Multi-LLM Reasoning}

\author{
  Chusen Li\textsuperscript{1,2,*} \quad
  Zhou Liu\textsuperscript{3,*}\\
  Shuigeng Zhou\textsuperscript{1} \quad
  Wentao Zhang\textsuperscript{3}\\
  \normalfont\small
  \textsuperscript{1}Fudan University \quad
  \textsuperscript{2}Zhongguancun Academy\\
  \normalfont\small
  \textsuperscript{3}Academy for Advanced Interdisciplinary Studies, Peking University\\
  \normalfont\small
  \textsuperscript{*}Equal contribution\\
  \normalfont\small
  \texttt{lichusen@whu.edu.cn} \quad
  \texttt{zhouliu25@stu.pku.edu.cn}\\
  \normalfont\small
  \texttt{sgzhou@fudan.edu.cn} \quad
  \texttt{wentao.zhang@pku.edu.cn}
}

\begin{document}

\maketitle

\begin{abstract}
Large language models increasingly rely on either reinforcement learning or multi-agent prompting to improve reasoning, yet these two paradigms remain difficult to combine. Directly applying single-agent reinforcement learning to multi-turn multi-agent systems faces following dilemmas: \textbf{i)Sparse rewards, role-level free-riding and excessive training overhead.} \textbf{ii)Agents only imitate to collaborate.} \textbf{iii) Fixed collaboration protocol falls into oscillating local optimum.} We introduce \textbf{TRACER}, a turn-level reinforcement framework for cooperative multi-LLM reasoning. TRACER separates collaborative decision making into a controller-regret layer, where controllers learn whether the agents should speak or skip the current round through regret matching, and a generation-credit layer, which optimizes proposer and reviewer utterances with role-specific GSPO rewards. This design \textbf{i)assigns credit at the level of both action modes and generated utterances, thus avoiding free-riding and sparse rewards. We only expand the choices made by the controllers, thus greatly reducing computational cost of training.} Moreover, \textbf{ii)agents acquire collaborative capability as they learn when to utter and what to speak.} Finally, \textbf{iii)by designing binary actions ingeniously, we extend classical game theory established for finite action spaces to deep learning, thus achieving mathematically rigorous convergence.} We train all local RL-style methods on the GSM8K training split and evaluate on held-out GSM8K, MATH500, and GPQA-Diamond to measure in-domain accuracy, cross-benchmark generalization, inference cost, and correction-preservation behavior. The resulting framework provides a compact and reproducible testbed for studying learned collaboration policies beyond fixed debate, voting, or aggregation protocols. Code is available in \url{https://github.com/Shark-Forest/TRACER}.
\end{abstract}

\begin{figure}[t]
\centering
\includegraphics[width=\columnwidth]{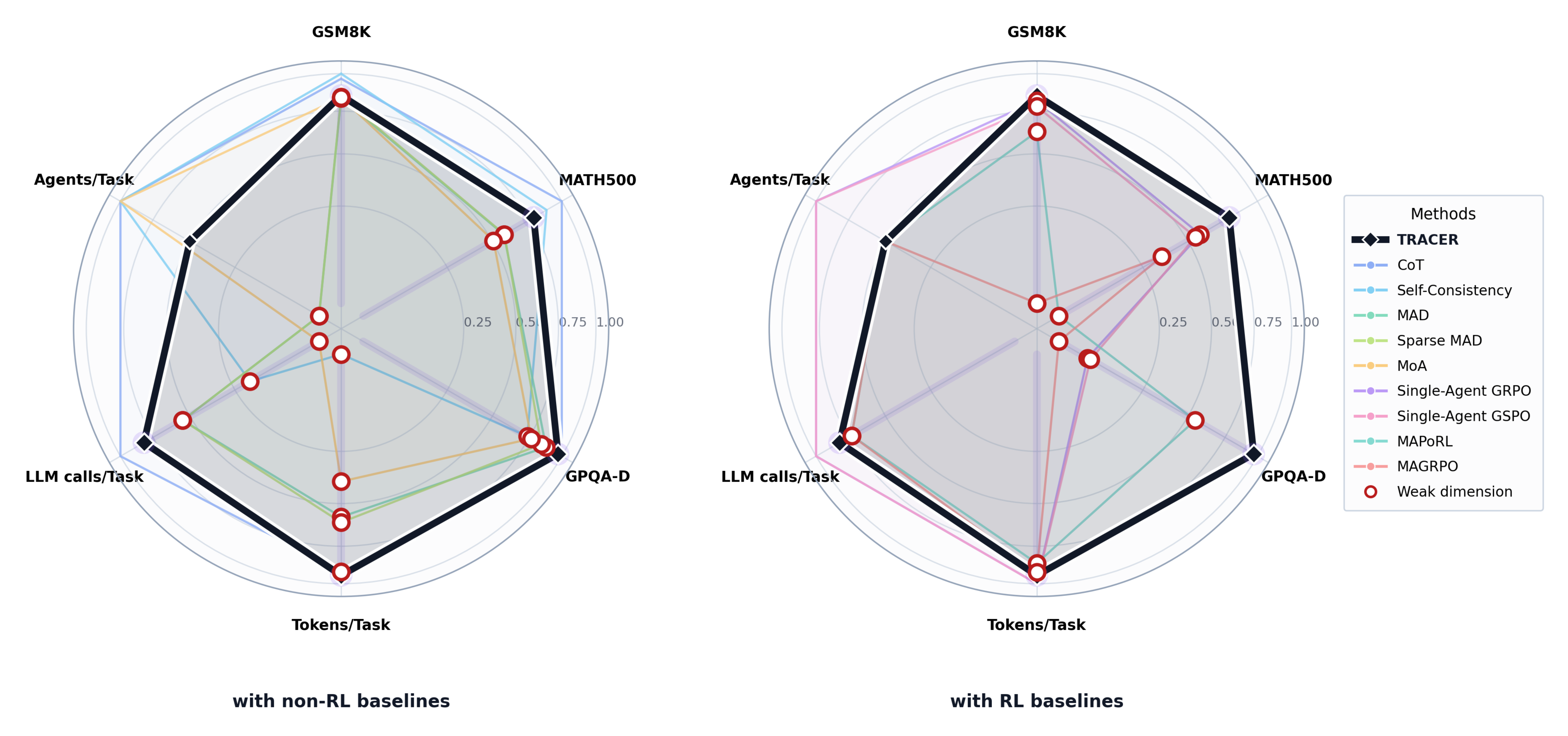}
\caption{\textbf{Radar comparison of TRACER against non-RL and RL baselines based on Qwen2.5-7B-Instruct across accuracy and efficiency metrics}. GSM8K, MATH500, and GPQA-D are accuracy metrics where larger values are better, while Tokens/Task, LLM calls/Task, and Agents/Task are cost metrics where smaller values are better and are therefore inverted for visualization. All axes are normalized so that points farther from the center indicate better performance. Hollow red circles mark weak dimensions of baseline methods, highlighting where competing approaches incur accuracy drops or higher inference cost. TRACER maintains a more balanced profile across tasks, i.e., preserving non-trivial reasoning accuracy and achieving multi-agent efficiency.
}
\label{fig:tracer_radar}
\end{figure}

\section{Introduction}

LLMs have demonstrated outstanding capabilities across both general-purpose and specialized task domains. In recent years, two mainstream paradigms have been widely adopted to further enhance LLM performance on challenging tasks: reinforcement learning (RL) and multi-agent systems (MAS)~\citep{liu2025reinforcement}. RL treats each utterance generated by an LLM as an action and optimizes model parameters on policy in multi-turn settings, as in PPO~\citep{schulman2017proximal}, GRPO~\citep{feng2025group}, GSPO~\citep{zheng2025gspo}, and related methods. Through RL, pretrained LLMs can better align with human preferences and task objectives, thereby improving their decision-making and reasoning capabilities on complex tasks~\citep{ouyang2022training}. MAS, by contrast, regards each LLM as an intelligent agent and designs role-specific prompts, multi-round interaction mechanisms, and collaborative workflows that guide agents to divide labor, negotiate, and jointly solve complex tasks~\citep{li2024survey}.

Considering the advantages of these two paradigms, a natural idea is to integrate them and train multi-agent systems (MAS) using reinforcement learning (RL)-based methods. However, directly transferring single-agent RL training paradigms to MAS encounters substantial difficulties~\citep{gronauer2022multi}.

\textbf{i) Turn-level reward sparsity~\citep{bui2025preference}, free-riding~\citep{liu2023lazy}, and excessive training overhead~\citep{huh2023multi}.}
Directly applying large-model RL algorithms (e.g., GRPO) to multi-round collaborative training leads to reward sparsity across intermediate rounds and reward confounding among different roles. In a reviewer–proposer scenario, a typical case occurs when the reviewer makes an incorrect judgment while the proposer ultimately retains the correct answer; under these circumstances, the reviewer still receives a positive reward. In contrast, if every round requires sampling and all candidate solutions are unfolded, the training cost will increase exponentially with the number of rounds~\citep{hu2024valuebaseddeepmultiagentreinforcement}.

\textbf{ii) Agents merely imitate collaboration.}
If multiple agents are trained separately and independently and are only distinguished by prompts at inference time, they may merely imitate a specific role without acquiring genuine collaborative capabilities~\citep{han2024llm}. In some cases, they only consume rounds without fostering meaningful discussion and may even corrupt a previously correct answer.

\textbf{iii) Current RL and MAS-RL methods exhibit training instability and low convergence rates.}
There has been some exploration of collaboratively training MAS with RL methods. However, these approaches often introduce prior preferences for specific collaboration schemes such as polling, debate, and voting, thus making them prone to becoming trapped in suboptimal policies or oscillating repeatedly among them. Their convergence speed is slow, the training process is unstable, and they often converge only to unstable local optima~\citep{liu2023maximum}. As for single-agent RL, unstable training curves result in poor generalization performance, and additional fine-tuning is often required when the models are transferred to datasets that differ from the training set.

To address the above issues, we propose \textbf{TRACER} (\textbf{T}urn-level \textbf{R}egret m\textbf{A}tching with inner reinforcement \textbf{ C}r\textbf{E}dit for collaborative \textbf{R}easoning). TRACER is a two-layer framework that jointly learns when the agents should speak and how each utterance should be optimized. Concretely, our contributions are as follows:

\begin{itemize}
     \item We propose a two-layer policy system with clearly defined responsibilities. The \textbf{Controller-Regret Layer} consists of two
        controllers that decide whether each agent should speak, while the \textbf{Generation-Credit Layer} optimizes the utterances produced by the agents. By design, the system adapts to the current context and encourages productive discussion, which helps reduce invalid loops and unnecessary backtracking.
        \item In the \textbf{Controller-Regret Layer}, we define \textbf{counterfactual regret} and \textbf{cumulative counterfactual regret} for the two actions available to each controller. By adopting a \textbf{regret-matching} algorithm to update each controller's policy, the system learns to act based on the current context. As a result, answers with higher latent confidence are more likely to be retained, whereas answers with lower latent confidence are more likely to be updated.

   \item In the \textbf{Generation-Credit Layer}, we explicitly evaluate the contribution of each
  agent. For the reviewer, rewards are assigned according to the correctness of its judgment,
  whereas for the proposer, rewards are assigned according to the correctness of its answer. By
  designing step-wise, role-specific rewards, we mitigate the reward sparsity and role-reward
  aliasing issues that arise when reinforcement learning is directly applied to multi-round
  collaboration. Moreover, because the actual training trajectories are generated through single-step sampling from the controllers rather than standard reinforcement learning rollouts, our approach maintains low training cost.
\end{itemize}

\section{Related Work}

\subsection{Reinforcement Learning}

Reinforcement Learning (RL) is a fundamental paradigm for sequential decision-making, formally modeled as a Markov Decision Process~\citep{sutton1998reinforcement}. Built on the Policy Gradient Theorem, classic policy gradient algorithms (e.g., Actor-Critic \citep{sutton1999policy}) have demonstrated strong performance in deep learning. TRPO~\citep{schulman2015trust} and its optimized variant PPO further resolve the limitations of adaptive step tuning and high computational cost in prior methods, becoming the de facto standard for large language model (LLM) training~\citep{ouyang2022training}, and spawning a series of LLM-specific reasoning enhancement algorithms. Another seminal related framework is Counterfactual Regret Minimization (CFR), designed for imperfect-information adversarial games~\citep{zinkevich2007regret}, which introduces counterfactual value calculation to eliminate state reachability uncertainty. However, most existing CFR-based methods are applied for competitive settings and finite action spaces, making them poorly suited for fully observable, cooperative and information-perfect multi-LLM tasks whose action space is infinite~\citep{hartley2017multi}, where credit assignment ambiguity, free-riding problems, and the lack of convergence guarantees for collaborative strategies remain critical unsolved challenges.

\subsection{Multi-Agent System}

LLM-based Multi-Agent System (MAS) has emerged as a powerful paradigm to tackle complex reasoning and decision-making tasks beyond the capability of a single LLM, by decomposing workloads and distributing subtasks across multiple collaborative LLM agents~\citep{tran2025multi}. Benefiting from the strong general reasoning and generalization ability of pre-trained LLMs, these systems achieve excellent performance in diverse downstream tasks without extensive domain-specific fine-tuning~\citep{li2023camel}. However, prevailing MAS predominantly rely on hand-crafted collaboration rules and fixed coordination mechanisms, rather than adaptive learning-based strategies~\citep{qian2024chatdev}~\citep{hong2023metagpt}. This not only causes rigid behavior imitation and reasoning deviation in multi-round interactions, but also leads to fundamental credit assignment ambiguity when mapping overall team performance to individual agents. Worse still, most existing frameworks are empirical, lacking rigorous theoretical convergence guarantees for their collaborative policies. To address these gaps, this work proposes a turn-level counterfactual regret formulation adapted for cooperative multi-LLM scenarios, which enables precise and fair credit assignment via single-step counterfactual action replacement, while providing provable convergence guarantees for learned strategies~\citep{foerster2018counterfactual}.

\raggedbottom
\section{Problem Definition}

We consider a setting in which multiple controllers engage in a collaborative game. Each controller has its own utility function and decides whether its associated agent should speak. In this paper, we focus on the case of two controllers and two agents, referred to as the proposer and the reviewer. Both agents are pretrained LLM-based agents that follow stochastic policies $\pi_{\theta_i}$. Given a prompt $x \sim \mathcal{D}$, $\pi_{\theta_i}(y \mid x)$ denotes the conditional probability of generating a response $y$, where $x$ consists of the raw question, a role-specific task description, and concise, robust structured information tailored differently to the proposer and the reviewer. In the first turn, the proposer is required to speak. From the second turn onward, the reviewer’s controller and the proposer’s controller alternate in deciding whether their respective agent should speak or skip the round. The round horizon is limited to $T$.

\section{TRACER}

\subsection{Two-Layer Controller Collaboration Architecture}

 We first describe how the final answer is obtained. When a new answer is proposed, its initial vote is set to zero. Each time the reviewer provides a positive judgment, the vote of the pending answer increases by one. If the reviewer provides a negative judgment, the vote decreases by one. If the reviewer’s judgment is invalid or its controller skips the round, the vote remains unchanged. The vote of the pending answer, together with the iteration index, determines a discrete state $s^t$, which is encoded as structured information and incorporated into the context. Whenever a new answer is generated, it replaces the previous one. If the new answer is identical to the previous answer, however, its vote is not reset. The final answer is defined as the latest proposed answer.

 \begin{figure}[t]
\centering
\includegraphics[width=\columnwidth]{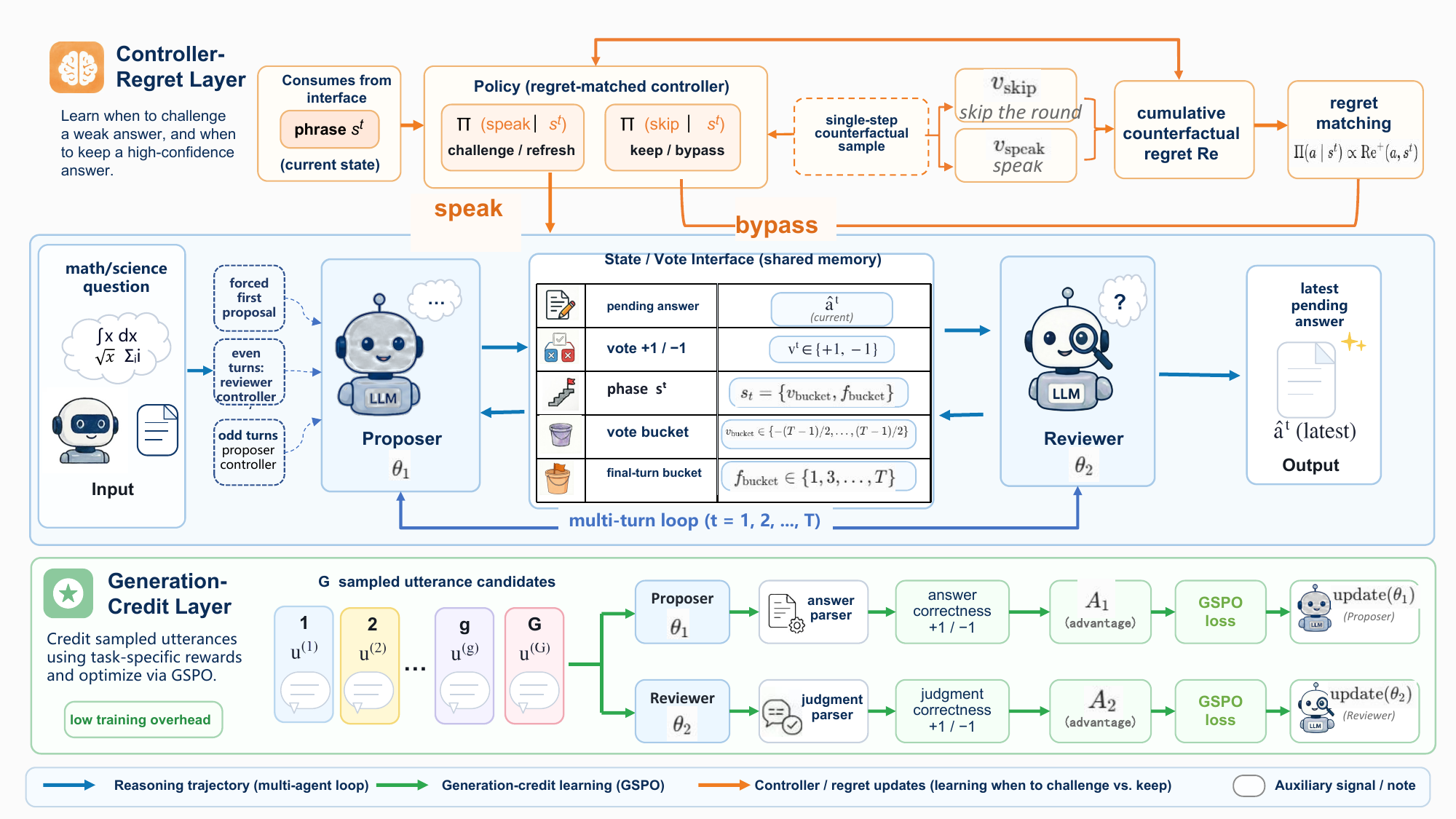}
\caption{\textbf{TRACER architecture.} The controller-regret layer learns when to skip or speak through regret matching, while the generation-credit layer assigns role-specific rewards to proposer and reviewer utterances and optimizes them with GSPO. The state/vote interface serves as shared memory across the multi-turn reasoning loop.}
\label{fig:tracer_architecture}
\end{figure}

Figure~\ref{fig:tracer_architecture} shows how the TRACER works in detail. TRACER decomposes cooperative reasoning into two named layers. The \textbf{Controller-Regret Layer} is a behavioral framework for the Multi-Agent System. Concretely, we train two outer policies $\Pi_1$ and $\Pi_2$. The proposer is forced to utter in the first turn. Then given a phase $s^t$ in the $t$-th turn when $t \geq 2$, $\Pi_i(\textbf{speak} \mid s^t)$ denotes the conditional probability of speaking and $\Pi_i(\textbf{skip} \mid s^t)$ denotes the conditional probability of keeping silent, i.e. skipping the round. There are $\frac{t(t + 1)}{2}$ different phases, as there are $\frac{t + 1}{2}$ different values of the iteration and there are $t$ different values of the vote from $-\frac{t - 1}{2}$ to $\frac{t - 1}{2}$. The reviewer's controller, i.e. $\Pi_2$, acts in the even rounds and the proposer's controller, i.e. $\Pi_1$ acts in the odd rounds. The Controller-Regret Layer decides the action and helps the system keep the high-confidence answer while adaptively replace the low-confidence answer, which induces the system to get closer to the correct answer. The scenario is equivalent tothe \textbf{Perfect Information Game} and we adopt
\textbf{Regret-Matching} to optimize it. Theoretical details are available in the appendix. By design, only the action sampled by the controller enters the training trajectory, thus we can apply step-wise GSPO reward and avoid excessive training overhead. What's more, the system will perceive its own capability and confidence of the pending answer after training, and both the controllers tend to skip the current round when they are uncertain about their own opinions in the reasoning process. The proposer has a lower probability of ruining the correct answer during revision, and the reviewer is also less likely to offer incorrect feedback.

The \textbf{Generation-Credit Layer} is the inner generation optimization layer. We use \textbf{Role-Specific Rewards} and \textbf{Group Sequence Policy Optimization, GSPO} to targetedly optimize the LLM parameters and boost the abilities of the reviewer and the proposer. Together, two layers guide the MAS to act adaptively and foster discussion.

\subsection{Credit and Regret Matching}

\textbf{Generation-Credit Layer: Intra-Group Advantage for utterance optimization.}\quad For the agent, if it's invoked at the $t$-th turn, a dedicated parser will extract structured information from the utterance. If the agent is the proposer, the reward $r_1$ only depends on the answer and is not related to the reasoning text. If the answer is correct, $r_1$ = $+1$. Otherwise $r_1$ = $-1$. If the agent is the reviewer, the reward $r_2$ only depends on the judgement and is not related to the quality of the reasoning text, and $r_2 = +1$ for a correct judgement, $r_2 = -1$ for a wrong judgement, while $r_2 = 0$ for an invalid judgement.

At the $t$-th turn, suppose $t \geq 2$ and an agent is invoked. We sample $G$ responses: $\left\{ y_{t}^{(j)} \right\}_{j=1}^{G}$. $r_i^{(j)}$ denotes the value attributed to $y_t^{(j)}$ and the character $i$ ($i \in \{1,2\}$), where $i = 1$ denotes the proposer and $i = 2$ denotes the reviewer. Then we compute $\mu_i$ which denotes the mean of $\{r_i^{(j)}\}_{j=1}^G$ and $\sigma_i$ which denotes the standard deviation of $\{r_i^{(j)}\}_{j=1}^G$. Then:
\[
    A_i^{(j)} = \frac{r_i^{(j)} - \mu_i}{\sigma_i + \epsilon}, \qquad
    \mathcal{L}_{\text{GSPO}}(\theta_{i}) = -\mathbb{E}\left[\min\left(\rho_{i,j} A_i^{(j)},\ \mathrm{clip}\left(\rho_{i,j}, 1-\epsilon', 1+\epsilon'\right) A_i^{(j)}\right)\right].
\]
where $\theta_{i}$ denotes the parameter group of the agent $u_i$ and
\[
    \rho_{i,j} = \left( \frac{\pi_{\theta_{i}}(y_t^{(j)} \mid \mathcal{C}_t)}{\pi_{\theta_{i}^{old}}(y_t^{(j)} \mid \mathcal{C}_t)}\right )^{\frac{1}{|y_t^{(j)}|}}.
\]
Here $\mathcal{C}_t$ is the prompt collection at the $t$-th turn. The details of how we build $\mathcal{C}_t$ are available in the appendix.

The benefit of this set of definitions is that we can clearly measure the contribution of a specific role aligned with the training goal without introducing extra noise. We train the reviewer to be a discriminator and the proposer to be a generator. The reason text only serves as the raw material for the two parsers (a proposer parser and a reviewer parser) to assemble a structured prompt collection and the phase $s^t$, rather than entering the context collection directly. In this way, we completely separate policy updating from context construction, instead of conflating them. This is particularly important in scenarios where small models are applied to Multi-Agent Systems.

\textbf{Controller-Regret Layer: regret matching for action-mode selection.}\quad The two controllers aim to determine to invoke the agents or not at the $t$-th turn, where $t \geq 2$. We introduce \textbf{counterfactual regret} for the two action modes under phase $s^t$, which means "\textbf{the value difference between taking this action and following the current policy}". Here we define the value of speak and skip for two controllers. For the proposer's controller, We set $v_{1,\text{skip}} = +1$ if the pending answer is right otherwise $v_{1,\text{skip}} = -1$. As for speak, $v_{1,\text{speak}} = +1$ if the new answer is right and $v_{1,\text{speak}} = -1$ otherwise. For the reviewer's controller, $v_{2,\text{skip}} = 0$. $v_{2,\text{speak}} = +1$ if the reviewer makes a correct judgement and $v_{2,\text{speak}} = -1$ if the reviewer makes an incorrect judgement. When the judgement is invalid, $v_{2,\text{speak}} = 0$.

Details about how we train $\Pi_1,\Pi_2$ are as follows. Suppose the system is at the $m$-th($m \geq 2$) episode, where 
\[
    m = kT + t,
\] $k$ denoting the samples we have processed and $t$ is the turn in the current sample. We introduce \textbf{instantaneous counterfactual regret} $re^m$ and
\[
    re_i^m(\text{skip},s^m) = v_{i,\text{skip}} - \sum_{a \in \{\text{skip}, \text{speek}\}} \Pi_i^m \left( a \mid s^m \right) v_{i,a}
\]and
\[
    re_i^m(\text{speak},s^m) = v_{i,\text{speak}} - \sum_{a \in \{\text{skip}, \text{speak}\}} \Pi_i^m \left( a \mid s^m \right) v_{i,a},
\]$i = 1$ when $m$ is odd while $i = 2$ when $m$ is even. As for other phases of the current controller and all phases of another controller, there instantaneous counterfactual regret are set to be $0$.

By the design, an action is seen to be "\textbf{good}" when its counterfactual regret is positive, since performing this action obtains a positive utility relative to following the current policy. Otherwise it is seen to be \textbf{bad}. The phase $s^m$ is necessary as it approximately indicates the confidence of the pending answer, and the probability distribution of the controller should be different under different scenarios. Moreover, in the appendix, we will see that it provides the theoretical guarantee. Next we maintain \textbf{cumulative counterfactual regret}:
\[
    Re_i^m(a, \, s) = Re_i^{m - 1}(a, \, s) , + \, re_i^m(a, \, s),
\] for $m \geq 2$ and all phases $s$ and two controllers, where $a \in \{\text{skip},\text{speak}\}$ and $i = 1,2$. This means we accumulate counterfactual regret by phase bucket.
We adopt naive regret matching to update the middle policy only for the current active controller, i.e.:
\[
\Pi_i^{m}\left(a \mid s^m\right) =
\left\{
\begin{array}{ll}
\displaystyle \frac{\max\left(Re_i^{m}(a, s^m), 0\right)}{\sum_{a} \max\left(Re_i^{m}(a, s^m), 0\right)}
& \displaystyle \text{if } \sum_{a} \max\left(Re_i^{m}(a, s^m), 0\right) > 0 \, \text{for } \, a \in \{\text{skip}, \text{speak}\} \\[15pt]
\displaystyle \frac{1}{2}
& \displaystyle \text{otherwise}.
\end{array}
\right.
\]
\begin{wrapfigure}{r}{0.66\textwidth}
\vspace{2pt}
\setlength{\fboxsep}{6pt}
\refstepcounter{algorithm}
\label{alg:three-tier-llm-mas}
\fcolorbox{algoframe}{algobox}{%
\begin{minipage}{0.61\textwidth}
\scriptsize
\textbf{Algorithm~\thealgorithm. TRACER Training for Cooperative Multi-LLM Reasoning}
\vspace{2pt}
\begin{algorithmic}[1]
\Require Agents $U$, horizon $T$, candidates $G$, steps $S$
\Ensure Scheduler $\Pi_1,\Pi_2$, generation policy $\theta$
\State Initialize $\Pi_i$ uniformly over two action modes under all phases.
\State Set cumulative middle and outer regrets to zero.
\For{training step $1,\dots,K$}
    \State Sample tasks and initialize each context $C_1=\{q\}$.
    \For{turn $1,\dots,T$}
        \If{an agent is invoked}
            \State Sample $G$ candidates from $\theta_{i}$ and compute GSPO advantages.
            \State Update $\theta_{i}$ using the clipped sequence-ratio loss.
        \EndIf
        \If{$t \geq 2$}
            \State Compute instantaneous counterfactual.
            \State Update cumulative counterfactual regret and the controller policy $\Pi$.
        \EndIf
        \State Update the phase $s^k$.
        \State Update the context $C_t$ with the help of the two parsers.
        \If{a valid final answer is generated}
            \State \textbf{break}
        \EndIf
    \EndFor
\EndFor
\State \Return averaged $\bar{\Pi_1}^{KT+1},\bar{\Pi_2}^{KT+1}$ and $\theta^{KT+1}$.
\end{algorithmic}
\end{minipage}}
\vspace{-10pt}
\end{wrapfigure}

Note that $\Pi_i$ is only initialized at the very beginning and is updated across the dataset. The cumulative regret is also maintained across the dataset.

\textbf{The phase is determined by visible signals both in the training and reasoning.} \quad $s^t$ is a tiny discrete state key which contains only two signals: pending-score-bucket and proposal-time-bucket. The former denotes the vote of the pending answer and the latter expresses the current round. By the design, we train the controller to act adaptively, i.e., try to keep the optimal answer and challenge the suboptimal answer without an extra neural network, without extra training overhead.

\subsection{End-to-End Training}
In each round, if an agent is invoked, we sample $G$ utterances and update the agent's parameter group by GSPO. Note that these candidates only serve the LLM parameter update and do not enter the genuine training trajectory. For the controllers update, we sample one extra utterance which does not participate in the GSPO update. Only this utterance enters the genuine trajectory after parsing. Finally we adopt
\[
    \bar{\Pi_i}^M \big( a \mid s^t \big) =
    \frac{1}{M_i(s^t)} \sum^{M_i(s^t)}_{m = 1} \Pi_i^m \big( a \mid s^t\big)
\] when $t$ is odd and $t \geq 3$, where 
\[
    M = KT,
\] $K$ being the training samples and $M_i(s^t)$ denotes the number of times the state $s^t$ is visited for the controller $i$ during the total training rounds (i.e., $M$).

Algorithm~\ref{alg:three-tier-llm-mas} summarizes the complete TRACER training pipeline in a compact form. The gray wrapped box emphasizes the executable control flow, while the preceding equations define the exact GSPO objective and the controller-layer regret updates. This layout keeps the method description continuous: the reader can inspect the training loop while the surrounding text explains how the two layers exchange information.

At a high level, each training step samples tasks, lets the controller choose an action mode, and then updates the generation policy whenever the action produces text. The same rollout also provides the counterfactual values used by controller, so the system learns both what to say and when the agents should speak.

\section{Experiments}

We evaluate TRACER on representative reasoning benchmarks and organize the experiments around three questions: (1) whether learned collaboration improves end-task reasoning accuracy; (2) whether the gains remain competitive under realistic inference costs; and (3) whether the collaborative system converges rapidly and stably when being trained. 

\subsection{Experimental Setup}

\paragraph{Overall.}
We train all local RL-style methods on the GSM8K training split only and evaluate on the held-out GSM8K test split, MATH500, and GPQA-D. This design separates in-domain arithmetic reasoning from cross-benchmark out-of-domain generalization: a method must learn collaboration from GSM8K and then transfer without additional task-specific RL on MATH500 or GPQA-D. We organize the study around three questions: whether learned collaboration improves final reasoning accuracy, whether the gain is cost-effective at inference time, and whether multi-turn interaction converges rapidly and stably in training. Detailed dataset counts, decoding budgets, turn horizons, and implementation-specific training settings are reported in Appendix~\ref{sec:experimental_details}.

\paragraph{Baselines.}
We compare against prompting-based and training-based baselines. The non-RL group includes CoT~\citep{wei2022chainofthought}, Self-Consistency~\citep{wang2023selfconsistency}, MAD~\citep{du2023multiagentdebate}, Sparse MAD~\citep{li2024sparsemad}, and MoA~\citep{wang2024mixtureofagents}. The RL group includes Single-Agent GRPO~\citep{shao2024deepseekmath}, Single-Agent GSPO~\citep{zheng2025gspo}, MAPoRL~\citep{park2025maporl}, and MAGRPO~\citep{liu2025magrpo}. We keep these families separated in the tables because TRACER targets learned collaboration, not only prompt-time coordination.

\paragraph{Metrics.}
The primary metric is final task accuracy. We also report inference-time cost using average token usage, model calls, and the number of active agents. Higher accuracy is better, while lower token, call, and active-agent counts indicate lower inference cost.

\subsection{Main Results}

Table~\ref{tab:main_results} summarizes the main benchmark results. The table is grouped by baseline family and backbone so that prompt-only collaboration, single-agent RL, and multi-agent RL are visually separated. This split is intentional: GSM8K measures in-domain arithmetic, whereas MATH500 and GPQA-D test transfer beyond the GSM8K training distribution.

\begin{table}[t]
\caption{\textbf{Main results on representative reasoning benchmarks.} The table is organized by baseline family and reports parallel results under two backbone settings. The Qwen single-agent RL entries use completed local GSM8K-trained final checkpoints evaluated under the same protocol.}
\vspace{2pt}
\label{tab:main_results}
\centering
\footnotesize
\renewcommand{\arraystretch}{1.08}
\setlength{\tabcolsep}{3pt}
\resizebox{\columnwidth}{!}{%
\begin{tabular}{l:c:c:c:c:c:c:c:c}
\specialrule{1.1pt}{0pt}{0pt}
\rowcolor{tableheader}
\multicolumn{1}{c}{\textbf{Method}} & \multicolumn{4}{c}{\textbf{Phi-3 Mini 4K Instruct}} & \multicolumn{4}{c}{\textbf{Qwen2.5-7B-Instruct}} \\
\cmidrule(lr){2-5}\cmidrule(lr){6-9}
\rowcolor{tablesubheader}
 & \textbf{GSM8K $\uparrow$} & \textbf{MATH500 $\uparrow$} & \textbf{GPQA-D $\uparrow$} & \textbf{Avg. $\uparrow$} & \textbf{GSM8K $\uparrow$} & \textbf{MATH500 $\uparrow$} & \textbf{GPQA-D $\uparrow$} & \textbf{Avg. $\uparrow$} \\
\hdashline
\specialrule{0.8pt}{0pt}{0pt}
\tablesectionrow{9}{(a) Non-RL Baselines}
\methodref{CoT}{wei2022chainofthought} & 0.8250 & \rankone{0.4130} & \rankone{0.3280} & \ranktwo{0.5220} & \ranktwo{0.9160} & \rankone{0.7550} & \rankone{0.3640} & \rankone{0.6783} \\
\hdashline
\methodref{Self-Consistency}{wang2023selfconsistency} & \rankone{0.9037} & 0.3580 & 0.3080 & \rankone{0.5232} & \rankone{0.9240} & \ranktwo{0.6760} & 0.2727 & \ranktwo{0.6242} \\
\hdashline
\methodref{MAD}{du2023multiagentdebate} & 0.8408 & 0.3740 & \ranktwo{0.3182} & 0.5110 & 0.8862 & 0.4760 & 0.3210 & 0.5611 \\
\hdashline
\methodref{Sparse MAD}{li2024sparsemad} & 0.8469 & 0.3740 & 0.3131 & 0.5113 & 0.8878 & 0.4780 & 0.3081 & 0.5580 \\
\hdashline
\methodref{MoA}{wang2024mixtureofagents} & 0.8044 & 0.3660 & 0.2475 & 0.4726 & 0.8870 & 0.4320 & 0.2828 & 0.5339 \\
\hdashline
\rowcolor{tablehighlight}
\ourscell{TRACER (ours)} &
\oursval{\ranktwo{0.8802}} & \oursval{\ranktwo{0.4000}} & \oursval{0.2727} & \oursval{0.5027} & \oursval{0.8901} & \oursval{0.6120} & \oursval{\ranktwo{0.3535}} & \oursval{0.6185} \\
\hdashline
\specialrule{0.6pt}{0pt}{0pt}
\tablesectionrow{9}{(b) RL Baselines}
\methodref{Single-Agent GRPO}{shao2024deepseekmath} & 0.8681 & \rankone{0.4360} & 0.0947 & 0.4663 & 0.8825 & \ranktwo{0.4780} & 0.0341 & \ranktwo{0.4649} \\
\hdashline
\methodref{Single-Agent GSPO}{zheng2025gspo} & \ranktwo{0.8795} & \ranktwo{0.4100} & 0.0821 & 0.4572 & 0.8741 & 0.4580 & 0.0366 & 0.4563 \\
\hdashline
\methodref{MAPoRL}{park2025maporl} & 0.8415 & 0.3600 & \ranktwo{0.2664} & \ranktwo{0.4893} & 0.8385 & 0.1000 & \ranktwo{0.2071} & 0.3819 \\
\hdashline
\methodref{MAGRPO}{liu2025magrpo} & 0.7252 & 0.3420 & 0.0884 & 0.3852 & 0.6960 & 0.3270 & 0.0189 & 0.3473 \\
\hdashline
\specialrule{0.6pt}{0pt}{0pt}
\rowcolor{tablehighlight}
\ourscell{TRACER (ours)} & \oursval{\rankone{0.8802}} & \oursval{0.4000} & \oursval{\rankone{0.2727}} & \oursval{\rankone{0.5027}} & \oursval{\rankone{{0.8901}}} & \oursval{\rankone{0.6120}} & \oursval{\rankone{0.3535}} & \oursval{\rankone{0.6185}} \\
\hdashline
\specialrule{1.1pt}{0pt}{0pt}
\end{tabular}
}
\end{table}

\begin{table}[t]
\caption{\textbf{Inference-time cost comparison under matched evaluation settings.} Accuracy repeats the full GSM8K test accuracy from Table~\ref{tab:main_results} where available, while token, call, and active-agent costs are measured on the full GSM8K test split. The Qwen single-agent RL costs are measured from the completed local full-test traces.}
\vspace{2pt}
\label{tab:cost_fairness}
\centering
\scriptsize
\renewcommand{\arraystretch}{0.96}
\setlength{\tabcolsep}{2.2pt}
\resizebox{\columnwidth}{!}{%
\begin{tabular}{:l:c:c:c:c:c:c:c:c:}
\specialrule{1.1pt}{0pt}{0pt}
\rowcolor{tableheader}
\multirow{2}{*}{\centering\textbf{Method}} & \multicolumn{4}{c}{\textbf{Phi-3 Mini 4K Instruct}} & \multicolumn{4}{c}{\textbf{Qwen2.5-7B-Instruct}} \\
\cmidrule(lr){2-5}\cmidrule(lr){6-9}
\rowcolor{tablesubheader}
 & \textbf{GSM8K $\uparrow$} & \textbf{Tok. $\downarrow$} & \textbf{Calls $\downarrow$} & \textbf{Agents $\downarrow$} & \textbf{GSM8K $\uparrow$} & \textbf{Tok. $\downarrow$} & \textbf{Calls $\downarrow$} & \textbf{Agents $\downarrow$} \\
\hdashline
\specialrule{0.8pt}{0pt}{0pt}
\tablesectionrow{9}{(a) Non-RL Baselines}
\methodref{CoT}{wei2022chainofthought} & 0.8250 & \ranktwo{1461.1} & \rankone{1.00} & \rankone{1} & \ranktwo{0.9160} & \ranktwo{1355.1} & \rankone{1.00} & \rankone{1} \\
\hdashline
\methodref{Self-Consistency}{wang2023selfconsistency} & \rankone{0.9037} & 14610.8 & 10.00 & \rankone{1} & \rankone{0.9240} & 13551.4 & 10.00 & \rankone{1} \\
\hdashline
\methodref{MAD}{du2023multiagentdebate} & 0.8408 & 5345.0 & 6.00 & 3 & 0.8862 & 5909.1 & 6.00 & 3 \\
\hdashline
\methodref{Sparse MAD}{li2024sparsemad} & 0.8469 & 4957.0 & 6.00 & 3 & 0.8878 & 5488.1 & 6.00 & 3 \\
\hdashline
\methodref{MoA}{wang2024mixtureofagents} & 0.8044 & 7995.4 & 12.00 & \rankone{1} & 0.8870 & 8408.5 & 12.00 & \rankone{1} \\
\hdashline
\ourscell{TRACER (ours)} &
\oursval{\ranktwo{0.8802}} & \oursval{\rankone{959.7}} & \oursval{\ranktwo{2.94}} & \oursval{\ranktwo{2}} & \oursval{0.8901} & \oursval{\rankone{1014.1}} & \oursval{\ranktwo{3.02}} & \oursval{\ranktwo{2}} \\
\hdashline
\specialrule{0.6pt}{0pt}{0pt}
\tablesectionrow{9}{(b) RL Baselines}
\methodref{Single-Agent GRPO}{shao2024deepseekmath} & 0.8681 & \ranktwo{282.1} & \rankone{1.00} & \rankone{1} & \ranktwo{0.8825} & \ranktwo{252.3} & \rankone{1.00} & \rankone{1} \\
\hdashline
\methodref{Single-Agent GSPO}{zheng2025gspo} & \ranktwo{0.8795} & \rankone{276.2} & \rankone{1.00} & \rankone{1} & 0.8741 & \rankone{246.2} & \rankone{1.00} & \rankone{1} \\
\hdashline
\methodref{MAPoRL}{park2025maporl} & 0.8415 & 1770.7 & 4.00 & \ranktwo{2} & 0.8385 & 2098.9 & 4.00 & \ranktwo{2} \\
\hdashline
\methodref{MAGRPO}{liu2025magrpo} & 0.7252 & 1267.1 & 4.00 & \ranktwo{2} & 0.6960 & 1297.0 & 4.00 & \ranktwo{2} \\
\hdashline
\specialrule{0.6pt}{0pt}{0pt}
\rowcolor{tablehighlight}
\ourscell{TRACER (ours)} & \oursval{\rankone{0.8802}} & \oursval{959.7} & \oursval{\ranktwo{2.94}} & \oursval{\ranktwo{2}} & \oursval{\rankone{0.8901}} & \oursval{1014.1} & \oursval{\ranktwo{3.02}} & \oursval{\ranktwo{2}} \\
\hdashline
\specialrule{1.1pt}{0pt}{0pt}
\end{tabular}
}
\end{table}

\vspace{-5pt}
\subsection{Cost Fairness}
\vspace{-3pt}

Table~\ref{tab:cost_fairness} compares the inference cost of different collaboration strategies. The accuracy columns repeat the full GSM8K test accuracy from Table~\ref{tab:main_results}; the cost columns use the full GSM8K test split whenever the corresponding full prediction trace records token accounting. Here \textbf{Tok.} denotes the average total prompt-plus-completion tokens consumed per full-test example across all agents and all turns, \textbf{Calls} counts the total number of model invocations per example, and \textbf{Agents} counts how many distinct agents actually speak.

\begin{table}[t!]
\caption{\textbf{TRACER ablation suite on GSM8K.} Rows remove or freeze one component; columns report GSM8K full-test accuracy and matched inference-time cost for each backbone. Values highlighted in blue represent the best performance. }
\vspace{0pt}
\label{tab:tracer_ablation_suite}
\centering
\scriptsize
\renewcommand{\arraystretch}{1.04}
\setlength{\tabcolsep}{2.2pt}
\resizebox{\columnwidth}{!}{%
\begin{tabular}{lcccccccc}
\hline
\rowcolor{tableheader}
\multicolumn{1}{l}{\textbf{Variant}} & \multicolumn{4}{c}{\textbf{Phi-3 Mini 4K Instruct}} & \multicolumn{4}{c}{\textbf{Qwen2.5-7B-Instruct}} \\
\hline
\rowcolor{tablesubheader}
 & \textbf{Acc. $\uparrow$} & \textbf{Tok. $\downarrow$} & \textbf{Calls. $\downarrow$} & \textbf{Agents. $\downarrow$} & \textbf{Acc. $\uparrow$} & \textbf{Tok. $\downarrow$} & \textbf{Calls $\downarrow$} & \textbf{Agents. $\downarrow$} \\
\hline
Full TRACER & \rankone{0.8802} & \rankone{959.7} & \rankone{2.94} & 2 & \rankone{0.8901} & \rankone{1014.1} & \rankone{3.02} & 2 \\
\hline
w/o all learned updates & \ablcell{0.8453}{\ablbad{-0.0349}} & \ablcell{1097.1}{\ablbad{+137.4}} & \ablcell{3.00}{\ablbad{+0.06}} & \ablcell{2}{\ablzero{+0}} & \ablcell{0.8848}{\ablbad{-0.0053}} & \ablcell{1023.2}{\ablbad{+9.1}} & \ablcell{3.00}{\ablgood{-0.02}} & \ablcell{2}{\ablzero{+0}} \\
\hline
w/o reviewer & \ablcell{0.7741}{\ablbad{-0.1061}} & \ablcell{2156.73}{\ablbad{+1197.03}} & \ablcell{5.00}{\ablbad{+2.06}} & \ablcell{1}{\ablgood{-1}} & \ablcell{0.8241}{\ablbad{-0.0660}} & \ablcell{2215.47}{\ablbad{+1201.37}} & \ablcell{5}{\ablbad{+1.98}} & \ablcell{1}{\ablgood{-1}} \\
\hline
w/o reviewer controller & \ablcell{0.8052}{\ablbad{-0.0750}} & \ablcell{1412.35}{\ablbad{+452.65}} & \ablcell{3.72}{\ablbad{+0.78}} & \ablcell{2}{\ablzero{+0}} & \ablcell{0.8582}{\ablbad{-0.0319}} & \ablcell{1458.62}{\ablbad{+444.52}} & \ablcell{3.81}{\ablbad{+0.79}} & \ablcell{2}{\ablzero{+0}} \\
\hline
w/o proposer controller & \ablcell{0.7650}{\ablbad{-0.1152}} & \ablcell{1468.92}{\ablbad{+509.22}} & \ablcell{3.85}{\ablbad{+0.91}} & \ablcell{2}{\ablzero{+0}} & \ablcell{0.8150}{\ablbad{-0.0751}} & \ablcell{1512.45}{\ablbad{+498.35}} & \ablcell{3.93}{\ablbad{+0.91}} & \ablcell{2}{\ablzero{+0}} \\
\hline
random reviewer controller & \ablcell{0.8211}{\ablbad{-0.0591}} & \ablcell{1567.28}{\ablbad{+607.58}} & \ablcell{3.92}{\ablbad{+0.98}} & \ablcell{2}{\ablzero{+0}} & \ablcell{0.8749}{\ablbad{-0.0152}} & \ablcell{1623.18}{\ablbad{+609.08}} & \ablcell{3.99}{\ablbad{+0.97}} & \ablcell{2}{\ablzero{+0}} \\
\hline
random proposer controller & \ablcell{0.7832}{\ablbad{-0.0970}} & \ablcell{1554.13}{\ablbad{+594.43}} & \ablcell{3.97}{\ablbad{+1.03}} & \ablcell{2}{\ablzero{+0}} & \ablcell{0.8347}{\ablbad{-0.0554}} & \ablcell{1605.73}{\ablbad{+591.63}} & \ablcell{4.05}{\ablbad{+1.03}} & \ablcell{2}{\ablzero{+0}} \\
\hline
w/o vote state & \ablcell{0.8097}{\ablbad{-0.0705}} & \ablcell{1258.46}{\ablbad{+298.76}} & \ablcell{3.21}{\ablbad{+0.27}} & \ablcell{2}{\ablzero{+0}} & \ablcell{0.8628}{\ablbad{-0.0273}} & \ablcell{1302.54}{\ablbad{+288.44}} & \ablcell{3.32}{\ablbad{+0.30}} & \ablcell{2}{\ablzero{+0}} \\
\hline
w/o iteration state & \ablcell{0.7892}{\ablbad{-0.0910}} & \ablcell{1330.96}{\ablbad{+371.26}} & \ablcell{3.46}{\ablbad{+0.52}} & \ablcell{2}{\ablzero{+0}} & \ablcell{0.8408}{\ablbad{-0.0493}} & \ablcell{1385.27}{\ablbad{+371.17}} & \ablcell{3.55}{\ablbad{+0.53}} & \ablcell{2}{\ablzero{+0}} \\
\hline
w/o reviewer GSPO update & \ablcell{0.7695}{\ablbad{-0.1107}} & \ablcell{1335.20}{\ablbad{+375.50}} & \ablcell{3.55}{\ablbad{+0.61}} & \ablcell{2}{\ablzero{+0}} & \ablcell{0.8195}{\ablbad{-0.0706}} & \ablcell{1395.30}{\ablbad{+381.20}} & \ablcell{3.64}{\ablbad{+0.62}} & \ablcell{2}{\ablzero{+0}} \\
\hline
    w/o proposer GSPO update & \ablcell{0.7703}{\ablbad{-0.1099}} & \ablcell{1333.04}{\ablbad{+373.34}} & \ablcell{3.52}{\ablbad{+0.58}} & \ablcell{2}{\ablzero{+0}} & \ablcell{0.8203}{\ablbad{-0.0698}} & \ablcell{1390.20}{\ablbad{+376.10}} & \ablcell{3.58}{\ablbad{+0.56}} & \ablcell{2}{\ablzero{+0}} \\
\hline
\end{tabular}
}
\end{table}

\vspace{-5pt}

\subsection{Ablation Study}
\vspace{-3pt}

Table~\ref{tab:tracer_ablation_suite} mirrors the final ablation entry points implemented in the TRACER training suite. We use compact \emph{w/o} notation for removed components so that the table remains comparable to the main result tables. The ablation suite is evaluated on the full GSM8K test split by default, since its goal is to isolate which learned collaboration components affect in-domain training behavior rather than to repeat the cross-benchmark transfer study. 
\
\vspace{-5pt}
\subsection{Training Dynamics}
\vspace{-3pt}

Figure~\ref{fig:training_dynamics_main} summarizes the GSM8K training traces for the completed matched baseline runs under both backbones and appends the currently available TRACER/Ours trace on the right. The first four columns retain the original baseline diagnostics: final-turn correctness for MAPoRL, checkpoint-level sampled evaluation for MAGRPO when the online reward log is uninformative, and mean reward for single-agent RL baselines. The rightmost panels show TRACER/Ours running R5 accuracy over the first 1,000 sampled training examples for both backbones. These curves are run-level diagnostics rather than held-out benchmark claims; the authoritative GSM8K, MATH500, and GPQA-D results remain those in Table~\ref{tab:main_results}.

\vspace{-5pt}

\begin{figure}[H]
\centering
\includegraphics[width=\columnwidth,height=0.36\columnwidth]{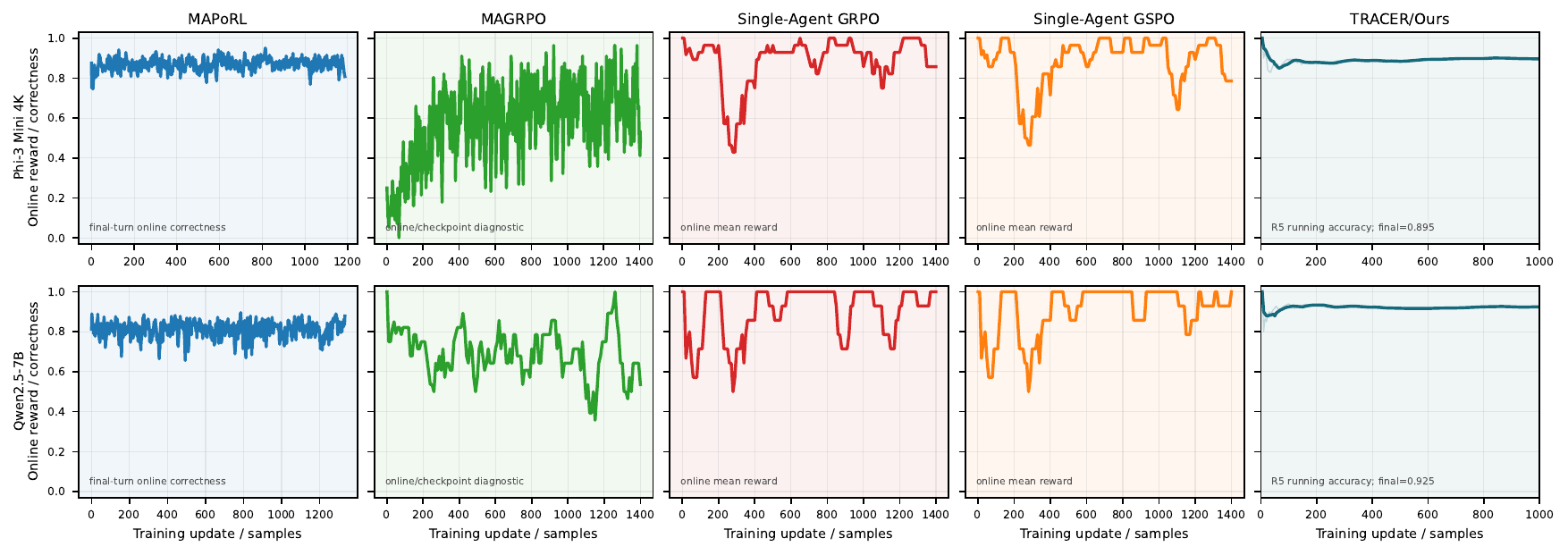}
\caption{\textbf{GSM8K training dynamics for completed matched methods.}}
\label{fig:training_dynamics_main}
\end{figure}

\vspace{-8pt}
\section{Conclusion and Limitations}
\vspace{-3pt}

  This paper presents TRACER, a two-layer framework for cooperative multi-LLM reasoning that jointly learns action-mode selection and role-specific generation optimization. TRACER combines GSPO-based inner-loop optimization with counterfactual credit assignment and regret-matching updates, moving beyond fixed debate or voting protocols toward adaptive collaboration policies. The empirical study evaluates both accuracy and inference cost under a unified protocol. By carefully designing a binary action space, namely speaking versus skipping, we formulate multi-agent collaboration as a game-theoretic model with a finite action space. This formulation provides mathematically rigorous convergence guarantees, fast and stable training dynamics. Compared with both non-RL and RL baselines, TRACER achieves a favorable trade-off performance between accuracy and inference cost across multiple benchmarks. Moreover, the framework can be readily extended to scenarios involving more agents only by adding corresponding initialized controllers. Limitations remain: training is currently GSM8K-only, transfer is tested on MATH500 and GPQA-D. Future work will broaden domains, horizons, and backbones while preserving explicit cost accounting.

\clearpage
\bibliographystyle{plainnat}
\bibliography{references}

\appendix
\renewcommand{\thetable}{A\arabic{table}}
\renewcommand{\theHtable}{A\arabic{table}}
\setcounter{table}{0}
\renewcommand{\thealgorithm}{A\arabic{algorithm}}
\providecommand{\theHalgorithm}{}
\renewcommand{\theHalgorithm}{A\arabic{algorithm}}
\setcounter{algorithm}{0}
\section*{Appendix Contents}
\label{sec:appendix_contents}

\noindent\begin{tabular}{p{0.22\columnwidth}p{0.70\columnwidth}}
\textbf{\hyperref[sec:experimental_details]{Appendix A}} & Experimental details: datasets, split policy, backbone settings, turn budgets, decoding, parsing, and evaluation counts. \\
\textbf{\hyperref[sec:prompt_templates]{Appendix B}} & Prompt templates: local prompt templates for non-RL baselines, RL baselines, and TRACER. \\
\textbf{\hyperref[sec:full_pseudocode]{Appendix C}} & Full TRACER pseudocode: expanded algorithmic flow for the controller and generation-credit updates. \\
\textbf{\hyperref[sec:convergence_proof]{Appendix D}} & Convergence proof: CFR-style regret definitions and convergence argument. \\
\end{tabular}

\vspace{6pt}

\section{Experimental Details}
\label{sec:experimental_details}

This appendix section reports the training hyperparameters, inference configuration, answer parsing, verification settings, and benchmark-specific preprocessing used for the matched local runs. Several baselines use method-specific implementations, so we report the actual local run settings rather than forcing all methods into a single nominal configuration.

\paragraph{Datasets and splits.}
All local MAPoRL, MAGRPO, single-agent RL, and TRACER runs use the GSM8K training split for optimization. Held-out evaluation uses the GSM8K test split for in-domain accuracy and MATH500 plus GPQA-Diamond for cross-benchmark generalization. GPQA-Diamond is evaluated with four shuffled option-order repeats per question.

\paragraph{Backbones and interaction budgets.}
We use Phi-3 Mini 4K Instruct and Qwen2.5-7B-Instruct as the two backbone families. Agents within the same run share the same backbone family. Multi-Agent RL methods share the same adaptive five runs both in training and reasoning.

\begin{table}[H]
\caption{\textbf{Training and evaluation configuration.} Local RL runs use the GSM8K training split for optimization and evaluate on GSM8K test, MATH500, and GPQA-D without task-specific fine-tuning on the latter two benchmarks.}
\label{tab:appendix_exp_details}
\centering
\footnotesize
\renewcommand{\arraystretch}{1.08}
\setlength{\tabcolsep}{3pt}
\begin{tabular}{p{0.14\columnwidth}p{0.19\columnwidth}p{0.58\columnwidth}}
\specialrule{1.1pt}{0pt}{0pt}
\rowcolor{tableheader}
\textbf{Category} & \textbf{Item} & \textbf{Value} \\
\specialrule{0.8pt}{0pt}{0pt}
\tablesectionrow{3}{Training}
Training set & dataset / split & GSM8K / train \\
Optimizer & optimizer / learning rate & MAPoRL, MAGRPO, GRPO-Phi, GSPO-Phi: Adafactor / $1\times10^{-5}$; Qwen single-agent fp32: AdamW; TRACER: \texttt{torch.optim.Adam} \\
Batching & batch size / accumulation & Per-device batch size 1; MAPoRL/MAGRPO accumulation 4; GRPO-Phi/GSPO-Phi accumulation 8; TRACER uses its native online update loop \\
Turn horizon & learned collaboration budget & MAPoRL/MAGRPO: 2 turns; TRACER: adaptive 5-turn horizon \\
Rollout & candidate count & MAPoRL PPO: \texttt{num\_sample\_generations}=20; MAGRPO-Phi: 4; MAGRPO-Qwen: 2; single-agent Phi GRPO/GSPO: 4; TRACER candidate chunks: 4 \\
Initialization & backbone / number of agents & Phi-3 Mini 4K or Qwen2.5-7B / 2 agents \\
\specialrule{0.6pt}{0pt}{0pt}
\tablesectionrow{3}{Inference}
Training decoding & temperature / top-$p$ / max tokens & MAPoRL/MAGRPO/GRPO/GSPO: 0.7 / 0.95; MAGRPO training max tokens 64; single-agent Phi training max tokens 300; TRACER temperature 0.3 \\
Evaluation decoding & max tokens & Default local math eval uses 300 when not overridden; MAGRPO-Phi GPQA-D uses 160; MAGRPO-Qwen MATH500 uses the 384-token rerun; single-agent Phi/Qwen MATH500 uses 512; Qwen single-agent GPQA-D uses 300 \\
Stopping & stop rule / max rounds & stop after a valid final answer or at the configured turn limit \\
Aggregation & final answer selection rule & normalized final answer; majority vote for multi-sample statistics \\
\specialrule{0.6pt}{0pt}{0pt}
\tablesectionrow{3}{Evaluation}
Parser & answer extraction rule & numeric extraction for GSM8K/MATH500; option extraction for GPQA-D \\
Verifier & exact-match / option-match rule & normalized exact match for math; option match for GPQA-D \\
Benchmark & split / sample count / repeat setting & GSM8K full test (1319); MATH500 full test (500); GPQA-Diamond uses 198 questions $\times$ 4 shuffled option-order repeats = 792 local evaluation records \\
Main-table policy & sampled diagnostics & Sampled-200 diagnostics and truncation-failure diagnostics are not used for main tables \\
\specialrule{1.1pt}{0pt}{0pt}
\end{tabular}
\end{table}

The MAGRPO+Qwen2.5-7B MATH500 value in Table~\ref{tab:main_results} is the full MATH500 rerun of the same GSM8K-trained checkpoint with \texttt{max\_new\_tokens}=384, giving \texttt{final\_turn\_accuracy}=0.327. The earlier default 64-token MATH500 diagnostic produced 0.0000 because the generations were truncated, so it is not used as the reported benchmark result.

\paragraph{Experiments compute resources.}8*H20s with LoRa adaptation both in training and reasoning.

\section{Prompt Templates}
\label{sec:prompt_templates}

This section records the prompt templates used for the local baseline and TRACER evaluations. We include prompts for non-RL baselines, MAPoRL/MAGRPO, single-agent RL baselines, and the TRACER proposal-review pipeline.

\begin{table}[H]
\caption{\textbf{Prompt source map.} The table maps each method family to the prompt source used in the local evaluation protocol.}
\label{tab:prompt_source_map}
\centering
\scriptsize
\renewcommand{\arraystretch}{1.05}
\setlength{\tabcolsep}{3pt}
\begin{tabular}{p{0.19\columnwidth}p{0.20\columnwidth}p{0.48\columnwidth}}
\specialrule{1.1pt}{0pt}{0pt}
\rowcolor{tableheader}
\textbf{Method family} & \textbf{Status} & \textbf{Local source} \\
\specialrule{0.8pt}{0pt}{0pt}
CoT / Self-Consistency / MAD / Sparse MAD / MoA & Listed & Appendix prompt templates below \\
\hdashline
Single-Agent GRPO / GSPO & Recovered & \path{MAPoRL/trl/trl/trainer/utils_multi_unified_chat.py}; same GSM8K formatting function as local RL training. \\
\hdashline
MAPoRL / MAGRPO & Recovered & \path{MAPoRL/trl/trl/trainer/utils_multi_unified_chat.py}, \path{MAPoRL/eval_cross_maporl.py}. \\
\hdashline
TRACER & Recovered & \path{Three-Layer.../.worktrees/final/src/config.py}. \\
\specialrule{1.1pt}{0pt}{0pt}
\end{tabular}
\end{table}

\subsection{Non-RL Baseline Prompt Templates}

\begin{promptbox}{CoT prompt}
Solve the following grade school math problems.Show your reasoning step by step.End each solution with 'Therefore, the answer is <number>.'

Q: Natalia sold clips to 48 of her friends in April, and then she sold half as many clips in May. How many clips did Natalia sell altogether in April and May?

A: Natalia sold 48/2 = 24 clips in May. Natalia sold 48+24 = 72 clips altogether in April and May.Therefore, the answer is 72.

Q: Weng earns \$12 an hour for babysitting. Yesterday, she just did 50 minutes of babysitting. How much did she earn?

A: Weng earns 12/60 = \$0.2 per minute. Working 50 minutes, she earned 0.2 x 50 = \$10. Therefore, the answer is 10.

Q: Betty is saving money for a new wallet which costs \$100. Betty has only half of the money she needs. Her parents decided to give her \$15 for that purpose, and her grandparents twice as much as her parents. How much more money does Betty need to buy the wallet?

A: In the beginning, Betty has only 100 / 2 = \$50. Betty's grandparents gave her 15 * 2 = \$30. This means, Betty needs 100 - 50 - 30 - 15 = \$5 more. Therefore, the answer is 5.

Q: Julie is reading a 120-page book. Yesterday, she was able to read 12 pages and today, she read twice as many pages as yesterday. If she wants to read half of the remaining pages tomorrow, how many pages should she read?

A: Maila read 12 x 2 = 24 pages today. So she was able to read a total of 12 + 24 = 36 pages since yesterday. There are 120 - 36 = 84 pages left to be read. Since she wants to read half of the remaining pages tomorrow, then she should read 84/2 = 42 pages. Therefore, the answer is 42.

Q: James writes a 3-page letter to 2 different friends twice a week. How many pages does he write a year?

A: He writes each friend 32=6 pages a week So he writes 62=12 pages every week That means he writes 12*52=624 pages a year Therefore, the answer is 624.

Q: Mark has a garden with flowers. He planted plants of three different colors in it. Ten of them are yellow, and there are 80\% more of those in purple. There are only 25\% as many green flowers as there are yellow and purple flowers. How many flowers does Mark have in his garden?

A: There are 80/100 * 10 = 8 more purple flowers than yellow flowers. So in Mark's garden, there are 10 + 8 = 18 purple flowers. Purple and yellow flowers sum up to 10 + 18 = 28 flowers. That means in Mark's garden there are 25/100 * 28 = 7 green flowers. So in total Mark has 28 + 7 = 35 plants in his garden. Therefore, the answer is 35.

Q: Albert is wondering how much pizza he can eat in one day. He buys 2 large pizzas and 2 small pizzas. A large pizza has 16 slices and a small pizza has 8 slices. If he eats it all, how many pieces does he eat that day?

A: He eats 32 from the largest pizzas because 2 x 16 = 32 He eats 16 from the small pizza because 2 x 8 = 16 He eats 48 pieces because 32 + 16 = 48 Therefore, the answer is 48.

Q: Randy has 60 mango trees on his farm. He also has 5 less than half as many coconut trees as mango trees. How many trees does Randy have in all on his farm?

A: Half of the number of Randy's mango trees is 60/2 = 30 trees. So Randy has 30 - 5 = 25 coconut trees. Therefore, Randy has 60 + 25 = 85 treeson his farm. Therefore, the answer is 85.

Q: {question} 

A: Let's think step by step.
\end{promptbox}

\begin{promptbox}{Self-Consistency prompt}
Adopt the same 8-shot prompt as CoT. There is no extra prompt guiding the model to vote and we set the number of samples to be 10.
\end{promptbox}

\begin{promptbox}{MAD / Sparse MAD prompt}
Initial prompt:

GSM8K: Can you solve the following math problem? {question}
Explain your reasoning.
Your final answer should be a single numerical number, in the form \boxed{answer}, at the end of your response.

MATH500/GPQA-D: From dataset keys.

Debate prompt:

GSM8K: These are the solutions to the problem from other agents:

One agent solution: ```{solution_1}'''

One agent solution: ```{solution_2}'''

Using the solutions from other agents as additional information, can you provide your answer to the math problem?
The original math problem is {original_problem}
Your final answer should be a single numerical number, in the form \boxed{answer}, at the end of your response.

MATH500: These are the solutions to the problem from other agents:
One agent solution: {solution_i}
Using the solutions from other agents as additional information, can you provide your answer to the math problem?The original math problem is {original_problem}Your final answer should be placed within \boxed{} at the end of your response.

GPQA-D: These are the solutions to the problem from other agents:
One agent solution: {solution_i}
Using the reasoning from other agents as additional advice, can you give an updated answer? Examine your solution and that other agents step by step.The original problem is {original_problem}Put your final answer in the form \boxed{A}, \boxed{B}, \boxed{C}, or \boxed{D} at the end of your response.

If no other solution is visible, the fallback self-check prompts are:

GSM8K: Can you double check that your answer is correct.Please reiterate your answer, with your final answer a single numerical number, in the form \boxed{answer}.

MATH500: Can you double check that your answer is correct.Please reiterate your answer, with your final answer placed within \boxed{} at the end of your response.

GPQA-D: Can you double check that your answer is correct.Put your final answer in the form \boxed{A}, \boxed{B}, \boxed{C}, or \boxed{D} at the end of your response.
\end{promptbox}

\begin{promptbox}{MoA prompt}
Layer0 Proposer Prompt:

GSM8K: Can you solve the following math problem? {question}Explain your reasoning.Your final answer should be a single numerical number, in the form \boxed {answer}, at the end of your response.

MATH500/GPQA-D: From dataset keys

Aggregator Prompt:

You are participating in a Mixture-of-Agents reasoning pipeline.You are the {stage_label} at layer {layer_index + 1} of {num_layers}.You will receive multiple candidate solutions to the same problem.Compare them carefully, synthesize the strongest reasoning, discard unsupported claims, and produce one improved answer.
Original problem:{original_problem}
Candidate solutions:

Candidate 1: {candidate_1}

Candidate 2: {candidate_2}

Candidate 3: {candidate_3}

{final_instruction}. In the middle layer, stage_label = "intermediate aggregater". In the final layer, stage_label = "final aggregator".

Final instruction:

GSM8K: Provide a single consolidated solution. Your final answer should be a single numerical number, in the form \boxed{answer}, at the end of your response.

MATH500: Provide a single consolidated solution. Your final answer should be placed within \boxed{} at the end of your response.

GPQA-D: Provide a single consolidated answer. Put your final answer in the form \boxed{A}, \boxed{B}, \boxed{C}, or \boxed{D} at the end of your response.
\end{promptbox}

\subsection{RL Baseline and Evaluation Prompts}

\begin{promptbox}{GSM8K Phi-3 prompt for MAPoRL / MAGRPO / single-agent RL}
Question: {question}

Provide a reasoning for your solution. At the end, you MUST write the answer in the following format:```

Answer: \boxed{ XXX }'''

Please ensure that the final answer is always formatted this way.
\end{promptbox}

\begin{promptbox}{GSM8K Qwen2.5 prompt for MAPoRL / MAGRPO / single-agent RL}
Question: {question}

Provide a very short and precise solution in at most two short sentences. Do not restate the question. At the end, you MUST write the final answer in the following format:

Answer: \boxed{XX}

Please ensure that the final answer is always formatted this way.
\end{promptbox}

\begin{promptbox}{GSM8K multi-agent follow-up prompt for MAPoRL / MAGRPO}
These are the final answers proposed by other agents:

Agent {i} proposed final answer: {candidate_answer}

Use the other agents' final answers only as hints. Re-solve the problem yourself with very short reasoning. Do not copy or quote the other agents' solutions. At the end, write exactly one final answer in the format: Answer: \boxed{XX}.

Once again, the question is: {question_for_input}
\end{promptbox}

\begin{promptbox}{MATH500 evaluation prompt}
Question: {question}

Solve the problem step by step. At the end, you MUST write the final answer in exactly the following format:

Answer: \boxed{YOUR_ANSWER}

Do not omit the boxed final answer.
\end{promptbox}

\begin{promptbox}{GPQA-Diamond evaluation prompt}
Answer the following multiple-choice question.

Question: {question}

Choices:
A. {choice_A}
B. {choice_B}
C. {choice_C}
D. {choice_D}

Think step by step. At the end, you MUST write the single best answer in exactly the following format:

Answer: \boxed{A}

Replace A with one of A, B, C, or D.
\end{promptbox}

\subsection{TRACER Proposal-Review Prompts}

\begin{promptbox}{TRACER reviewer prompt}
{context}

You are a careful reasoning assistant.
Check whether the current pending final answer is correct for the original math problem.
First line: RIGHT or WRONG.
Second line: one short reason.
Do not write a new solution or a new final answer.

Review:
\end{promptbox}

\begin{promptbox}{TRACER proposer prompt}
{context}

Solve the original math problem directly.
If there is a pending solution or short review feedback, use it only as a hint.
Keep the reasoning concise.
End with exactly one final line: Final answer: <number>.

Solution:
\end{promptbox}

\begin{promptbox}{TRACER MATH500 proposer prompt}
{context}

Solve the original math problem directly.
If there is a pending solution or short review feedback, use it only as a hint.
Keep one concise chain of thought; do not branch into alternatives.
Use the exact answer form required by the problem.
End with exactly one final line: Final answer: \boxed{<answer>}.

Solution:
\end{promptbox}

\begin{promptbox}{TRACER GPQA-Diamond proposer prompt}
{context}

Solve the original multiple-choice science question directly.
If there is a pending solution or short review feedback, use it only as a hint.
Keep one concise chain of thought; do not branch into alternatives.
Choose exactly one option letter from A, B, C, and D.
End with exactly one final line: Final answer: <A/B/C/D>.

Solution:
\end{promptbox}

\section{Full TRACER Training Pseudocode}
\label{sec:full_pseudocode}

The main text uses a compact wrapped version of the training loop to save space. Algorithm~\ref{alg:full-three-tier-llm-mas} gives the expanded pseudocode, including the role-specific GSPO update and controller-layer regret matching.

\refstepcounter{algorithm}
\label{alg:full-three-tier-llm-mas}
\noindent\textbf{Algorithm~\thealgorithm. Full TRACER Training for Cooperative Multi-LLM Reasoning}

{\scriptsize
\begin{algorithmic}[1]
\Require Two pretrained agents, turn horizon $T$, candidate count $G$, training steps $K$, batch size $E$, evaluator $R(\cdot,y^*)\in[0,1]$, constants $\epsilon,\epsilon'$.
\Ensure Trained Controller $\Pi_1,\Pi_2$ and inner generation policy $\theta_1,\theta_2$
\State Initialize $\Pi_i^0(a\mid \text{phase})=\frac{1}{2}$ for $a\in\{\text{skip},\text{speak}\},i = 1,2$ and all phases.
\State Initialize cumulative regrets $Re_i^0(\cdot,\cdot)\gets 0$, $i = 1,2$.
\State Initialize $M_i(\cdot) = 0$ for $i = 1,2$ and all phases.
\For{$k=1,\dots,K$}
    \State Sample $E$ training problems and initialize each environment context as $C_e=\{q\}$.
    \State Initialize phase-specific accumulators $\Pi_i^1(\cdot \mid \cdot)\gets \frac{1}{2}, Re_i^1(\cdot,\cdot)\gets 0$, $i = 1,2$.
    \For{each environment $e\in\{1,\dots,E\}$ in parallel}
        \For{$t=1,\dots,T$}
                \State Set $m = kT + t$.
                \If{there is no pending answer}
                    \State Set $a\gets\text{speak}$ so the proposer must generate an initial answer.
                \Else
                    \State Set $i = 1$ if $t$ is odd and $i = 2$ otherwise.
                    \State Sample action $a \sim \Pi_i^s(\cdot \mid s_e^{m})$.
                    \State Compute $re_i^m(b,s_e^m)=v_{i,b}-\sum_{c\in\{\text{skip},\text{speak}\}}\Pi_i^m(c\mid s_e^m)v_{i,c}$ for $b\in\{\text{skip},\text{speak}\}$.
                    \State Set $re_{-i}^m(\cdot,\cdot)$ and $re^m_i(\cdot,\text{other phases}) = 0$.
                    \State Update $Re_{i/-i}^m(b,s^m_e)=Re_{i/-i}^{m-1}(b,s^m_e)+re_{i/-i}^m(b,s^m_e)$ for all phases.
                    \State $M_i(s_e^m) += 1$.
                    \State Update $\Pi_i^{m+1}(\cdot,s_e^m)$ by regret matching over $\{\text{skip},\text{speak}\}$.
                \EndIf
                \If{$a=\text{speak}$}
                    \State Sample $G$ proposer candidates $\{y_i^{(j)}\}_{j=1}^G\sim \theta_i^t(\cdot\mid C_e)$.
                    \State Compute rewards $\{r_i^{(j)}\}_{j=1}^G$, statistics $\mu_i,\sigma_i$, and advantages $A_i^{(j)}=\frac{r_i^{(j)}-\mu_i}{\sigma_i+\epsilon}$.
                    \State Compute importance weights $\rho_{i,j}$ and the GSPO loss $\mathcal{L}_{\text{GSPO}}(\theta_i^t)$.
                    \State Update $\theta_i^{t+1}\gets \theta_i^t-\eta\nabla \mathcal{L}_{\text{GSPO}}(\theta_i^t)$.
                \EndIf
            \State Update context $C_e\gets C_e\cup\{y_{\text{par}}\}$ if there is a new utterance and $y_{\text{par}}$ is the raw content parsed by the parser.
            \State Update the phase $s_e^{m+1}$.
            \If{a valid final answer has been generated}
                \State \textbf{break}
            \EndIf
        \EndFor
    \EndFor
    \State Average $\Pi_i$.
\EndFor
\State \Return $\Pi_1^{KT + 1},\Pi_2^{KT + 1} \theta^{K+1}$.
\end{algorithmic}
}

\section{Convergence Proof of the TRACER Algorithm: Based on the Classic Counterfactual Regret Minimization Framework}
\label{sec:convergence_proof}

We adopt a notation system consistent with the classic Counterfactual Regret Minimization (CFR) theory \citep{zinkevich2007regret}. Note that in this paper, $s^t$ serves as a phase bucket and corresponds to the information set.

\subsection{Extended Game Basic Definitions}

\begin{definition}[Extended-Form Game]
A finite extended-form game consists of:
\begin{itemize}
    \item A finite set of players $P = \{1, 2, \dots, N\}$. In TRACER, $N=2$, corresponding to the proposer's controller and the reviewer's controller.
    \item A finite set of histories $H$, where each history $h \in H$ is a sequence of actions.
    \item A set of terminal histories $Z \subseteq H$.
    \item An action function $A(h) = \{a : (h,a) \in H\}$ denoting the available actions at history $h$.
    \item A player function $P: H \setminus Z \to P \cup \{c\}$ denoting the player who acts at history $h$, where $c$ denotes the environment.
    \item A payoff function $u_i: Z \to \mathbb{R}$ for each player $i$. In TRACER, we apply the payoff function to two roles and $u_i = v_{i,a}$ naturally, where 
    \[
        a \in \{\text{skip},\text{speak}\}.
    \]
    Rigorously speaking, $u_i$ is determined by the final answer rolled out from $i$. Given that single-round accuracy and final accuracy can be enhanced simultaneously, we simply yield $u_i = v_{i,a}$ to reduce computational overhead and boost training stability. 
    \item A partition of information sets $\mathcal{I}_i$ for each player $i$, satisfying that for any $I \in \mathcal{I}_i$ and any $h, h' \in I$, $A(h) = A(h')$.
    \item A behavioral strategy $\sigma_i$ for player $i$ is a mapping from each information set $I \in \mathcal{I}_i$ to a probability distribution over the action set $A(I)$, where $\sigma_i(I, a)$ denotes the probability that player $i$ selects action $a$ at information set $I$.
  In TRACER, the action space is uniform across all information sets with $A(I) = \{\text{skip},\text{speak}\}$, so the strategy satisfies the normalization condition $\sigma_i(I, \text{skip}) + \sigma_i(I, \text{speak}) = 1$ for all $I \in \mathcal{I}_i$ and $i \in \{1,2\}$. Initially, $\sigma_i(I,\text{skip}) = \sigma_i(I,\text{speak}) = \frac{1}{2}$.
    \item A strategy profile $\sigma = (\sigma_1, \sigma_2)$ is a tuple containing one behavioral strategy for each player, which completely determines the probabilistic decision rules of all agents in the collaborative system.
\end{itemize}
In the TRACER collaborative system, the game is restricted to $T$ rounds of interaction, where the proposer's controller and the reviewer's controller act alternately, with the proposer moving first.
\end{definition}

\subsection{Strategy and Regret Definitions}

\begin{definition}[Behavioral Strategy]
A behavioral strategy for player $i$ is a mapping $\sigma_i$ such that for each information set $I \in \mathcal{I}_i$, $\sigma_i(I)$ is a probability distribution over $A(I)$.
Let $\sigma = (\sigma_1, \sigma_2)$ denotes a strategy profile, and $\Sigma_i$ denote the set of all behavioral strategies for player $i$.
\end{definition}

\begin{definition}[Reach Probability]
For any history $h$, the reach probability under strategy profile $\sigma$ is defined as:
\[
\pi^\sigma(h) = \prod_{h' \sqsubset h, a \in A(h')} \sigma_{P(h')}(h',a),
\]
where $h' \sqsubset h$ denotes that $h'$ is a prefix of $h$.

Define the player-$i$ contribution to the reach probability as:
\[
\pi^\sigma_i(h) = \prod_{h' \sqsubset h, a \in A(h'), P(h')=i} \sigma_i(h',a).
\]

Define the contribution to the reach probability from other players and chance as:
\[
\pi^\sigma_{-i}(h) = \prod_{h' \sqsubset h, a \in A(h'), P(h') \neq i} \sigma_{P(h')}(h',a).
\]
Clearly, $\pi^\sigma(h) = \pi^\sigma_i(h) \cdot \pi^\sigma_{-i}(h)$.
\end{definition}

\begin{definition}[Counterfactual Value]
For an information set $I \in \mathcal{I}_i$, the counterfactual value of player $i$ taking action $a$ at $I$ is defined as:
\[
v_i(\sigma, I, a) = \sum_{h \in I, z \in Z, h \sqsubset z} \pi^\sigma_{-i}(h) \cdot \sum_{z \in Z(h)}\pi^\sigma(h,z) \cdot u_i(z),
\]
where $\pi^\sigma(h,z)$ is the probability of reaching terminal history $z$ from history $h$. In TRACER, the historical trajectories and actions are fully observable to the controller. Thus the information set $I$ is a perfect information set. Moreover, once the terminal history $z$ is determined under $I$, there is only one pre-history $h$. So $\left| I \right| = 1$ and $\pi^\sigma_{-i}(h) = 1, \pi^\sigma(h,z) = 1$. So $v_i(\sigma, I, a) = v_{i,a}$, where the latter $v_{i,a}$ is defined above, $i \in \{1,2\}$ and $a \in \{\text{skip},\text{speak}\}$. 

The counterfactual value at information set $I$ is:
\[
v_i(\sigma, I) = \sum_{a \in A(I)} \sigma_i(I,a) \cdot v_i(\sigma, I, a).
\]
\end{definition}

\begin{definition}[Instantaneous and Cumulative Counterfactual Regret]
At iteration $m$, the instantaneous counterfactual regret of player $i$ for action $a$ at information set $I$ is:
\[
r_i^m(I,a) = v_i(\sigma^m, I,a) - v_i(\sigma^m, I).
\]
The cumulative counterfactual regret is:
\[
R_i^M(I,a) = \sum_{m=1}^M r_i^m(I,a).
\]
The overall counterfactual regret for player $i$ is:
\[
R_i^M = \max_{\sigma'_i \in \Sigma_i} \sum_{m=1}^M \left( u_i(\sigma'_i, \sigma^m_{-i}) - u_i(\sigma^m) \right).
\]
In TRACER, we introduce the phase $s^m$ and $r_i^m(I,a), R_i^M(I,a)$ are replaced by $re_i^m(a,s^m), Re_i^M(a,s^M)$. 
\end{definition}

\subsection{Correspondence Between TRACER and Classic CFR}

\subsubsection{Overview of the TRACER Framework}
TRACER is a dual-controller collaborative system with the following features:
\begin{enumerate}
    \item Players: proposer's controller $u_1$ and reviewer's controller $u_2$, corresponding to the two players in classic CFR.
    \item Round mechanism: finite $T$ rounds of interaction, with agents speaking alternately, proposer's controller first.
    \item Role division:
    \begin{itemize}
        \item Odd rounds: When there is no pending answer, the proposer is forced to speak. Otherwise, the proposer's controller decides whether to invoke the proposer to generate a new answer or skip the round.
        \item Even rounds: the reviewer's controller determines whether to invoke the reviewer. 
    \end{itemize}
    \item Information structure: given a prompt $x \sim \mathcal{D}$, each agent observes different structured information.
\end{enumerate}

\subsubsection{Correspondence Table}
\begin{table}[h]
\centering
\begin{tabular}{@{}ll@{}}
\toprule
Classic CFR Concept & Corresponding TRACER Concept \\
\midrule
Players 1, 2 & proposer's controller $u_1$, reviewer's controller $u_2$ \\
Information set $I \in \mathcal{I}_i$ & Observation state containing problem, role description, and structured information \\
Behavioral strategy $\sigma_i$ & Controller policies $\Pi_i(\cdot \mid s^t)$\\
Iteration index $t$ & Interaction round index \\
\bottomrule
\end{tabular}
\end{table}

\subsection{Key Lemmas}

We establish key lemmas for the decomposition of counterfactual regret in TRACER, corresponding to the core lemmas of classic CFR.

\begin{lemma}[Counterfactual Regret Decomposition)]. For any player $i$ and any global iteration count $M \geq 1$,
\[
R_i^M \leq \sum_{I \in \mathcal{I}_i} \max_{a \in A(I)} R_i^M(I, a),
\]
where $R_i^M = \max_{\sigma_i' \in \Sigma_i} \sum_{m=1}^M \left( u_i(\sigma_i', \sigma_{-i}^m) - u_i(\sigma^m) \right)$ is the overall counterfactual regret, and $R_i^M(I, a) = \sum_{m=1}^M r_i^m(I, a)$ is the cumulative counterfactual regret for action $a$ at information set $I$.
\end{lemma}

\begin{proof}
Consider an arbitrary alternative strategy $\sigma_i' \in \Sigma_i$. We first prove that for this fixed $\sigma_i'$,
\[
\sum_{m=1}^M \left( u_i(\sigma_i', \sigma_{-i}^m) - u_i(\sigma^m) \right) \leq \sum_{I \in \mathcal{I}_i} \max_{a \in A(I)} R_i^M(I, a).
\]
Since this inequality holds for \textbf{every} possible $\sigma_i'$, taking the maximum over all $\sigma_i' \in \Sigma_i$ on the left-hand side will yield the final result.

\vspace{0.5em}
\noindent \textbf{Step 1: Expand the expected payoff difference over terminal histories}

By the definition of expected payoff in extensive-form games, the payoff of a strategy profile is the weighted sum of payoffs at all terminal histories, with weights equal to their reach probabilities. Therefore, the difference in expected payoff between using the alternative strategy $\sigma_i'$ (for player $i$ only) and the actual strategy profile $\sigma^m = (\sigma_i^m, \sigma_{-i}^m)$ at iteration $m$ is:
\[
u_i(\sigma_i', \sigma_{-i}^m) - u_i(\sigma^m) = \sum_{z \in Z} \left[ \pi^{(\sigma_i', \sigma_{-i}^m)}(z) - \pi^{\sigma^m}(z) \right] u_i(z).
\]
where:
\begin{itemize}
\item $\pi^{(\sigma_i', \sigma_{-i}^m)}(z)$: Probability that terminal history $z$ is reached when player $i$ uses strategy $\sigma_i'$ and all other players follow their actual strategies $\sigma_{-i}^m$.
\item $\pi^{\sigma^m}(z)$: Probability that terminal history $z$ is reached under the actual strategy profile $\sigma^m$.
\item $u_i(z)$: Payoff received by player $i$ at terminal history $z$.
\end{itemize}

\vspace{0.5em}
\noindent \textbf{Step 2: Decompose the reach probability difference at all decision points of player $i$}

For any terminal history $z$, let $H_i(z) = \{ h \sqsubset z \mid P(h) = i \}$ denote the set of all prefix histories of $z$ where player $i$ is the acting player. A fundamental structural property of extensive-form games is that the reach probability of $z$ can be \textit{linearly decomposed} at every decision point of player $i$.

Formally, for any strategy profile $\sigma$, the reach probability $\pi^\sigma(z)$ can be written as a telescoping product over all decision points of player $i$ on the path to $z$. When we only change player $i$'s strategy from $\sigma_i^m$ to $\sigma_i'$, the difference in reach probabilities is the sum of the contributions from each individual decision point of player $i$:
\[
\pi^{(\sigma_i', \sigma_{-i}^m)}(z) - \pi^{\sigma^m}(z) = \sum_{h \in H_i(z)} \pi_{-i}^m(h) \cdot \left( \sigma_i'(I(h), a(h)) - \sigma_i^m(I(h), a(h)) \right) \cdot \pi^m(h, z).
\]
where:
\begin{itemize}
\item $\pi_{-i}^m(h)$: Probability of reaching history $h$ due to the actions of all players \textbf{except} player $i$ (this is identical under both $\sigma^m$ and $(\sigma_i', \sigma_{-i}^m)$ because we do not change the strategies of other players).
\item $I(h)$: The unique information set containing history $h$.
\item $a(h)$: The unique action taken at history $h$ on the path to $z$.
\item $\sigma_i(I(h), a(h))$: Probability that player $i$ chooses action $a(h)$ at information set $I(h)$.
\item $\pi^m(h, z)$: Probability of reaching terminal history $z$ from history $h$ under the actual strategy profile. $\sigma^m$.
\end{itemize}

\textit{Intuition}: Changing player $i$'s strategy affects the reach probability of $z$ by changing the probability of taking the required action at every decision point of player $i$ on the path to $z$. The total effect is the sum of these individual changes.

\vspace{0.5em}
\noindent \textbf{Step 3: Substitute the decomposition into the payoff difference}

Substitute the reach probability difference from Step 2 into the payoff difference expression from Step 1:
\[
u_i(\sigma_i', \sigma_{-i}^m) - u_i(\sigma^m) = \sum_{z \in Z} \left[ \sum_{h \in H_i(z)} \pi_{-i}^m(h) \cdot \left( \sigma_i'(I(h), a(h)) - \sigma_i^m(I(h), a(h)) \right) \cdot \pi^m(h, z) \right] u_i(z).
\]

We can swap the order of summation over terminal histories $z$ and decision histories $h$. Let $Z(h) = \{ z \in Z \mid h \sqsubset z \}$ denote the set of terminal histories that have $h$ as a prefix. Then:
\[
= \sum_{h: P(h)=i} \pi_{-i}^m(h) \cdot \left( \sigma_i'(I(h), a(h)) - \sigma_i^m(I(h), a(h)) \right) \cdot \sum_{z \in Z(h)} \pi^m(h, z) u_i(z).
\]

\vspace{0.5em}
\noindent \textbf{Step 4: Group terms by information sets}

Now we group the terms by information sets. For any information set $I \in \mathcal{I}_i$, let $H(I) = \{ h \in H \mid h \in I \}$ denote the set of histories contained in $I$. By the \textbf{definition of information sets}:
\begin{itemize}
    \item The action set is identical for all histories in the same information set: $A(h) = A(I)$ for all $h \in I$.
    \item The strategy of player $i$ is identical for all histories in the same information set: $\sigma_i(I(h), a) = \sigma_i(I, a)$ for all $h \in I$ and all $a \in A(I)$.
\end{itemize}
This is because player $i$ cannot distinguish between different histories in the same information set, so they must use the same strategy for all of them.

Therefore, we can rewrite the sum over individual histories $h$ as a double sum over information sets $I$ and actions $a \in A(I)$:
\[
= \sum_{I \in \mathcal{I}_i} \sum_{a \in A(I)} \left( \sigma_i'(I, a) - \sigma_i^m(I, a) \right) \cdot \sum_{h \in I} \pi_{-i}^m(h) \cdot \sum_{z \in Z(h)} \pi^m(h, z) u_i(z).
\]

\vspace{0.5em}
\noindent \textbf{Step 5: Substitute the definition of counterfactual value}

Recall from Definition 4 that the \textit{counterfactual value} of taking action $a$ at information set $I$ is exactly:
\[
v_i(\sigma^m, I, a) = \sum_{h \in I} \pi_{-i}^m(h) \cdot \sum_{z \in Z(h)} \pi^m(h, z) u_i(z).
\]
This is the expected payoff if player $i$ were to take action $a$ at information set $I$, and then follow the strategy profile $\sigma^m$ for the rest of the game.

Substituting this definition into our expression gives:
\[
u_i(\sigma_i', \sigma_{-i}^m) - u_i(\sigma^m) = \sum_{I \in \mathcal{I}_i} \sum_{a \in A(I)} \left( \sigma_i'(I, a) - \sigma_i^m(I, a) \right) \cdot v_i(\sigma^m, I, a).
\]

\vspace{0.5em}
\noindent \textbf{Step 6: Rewrite using instantaneous regret and prove the vanishing terms}

Recall from Definition 5 that the \textit{instantaneous counterfactual regret} for action $a$ at information set $I$ at iteration $m$ is:
\[
r_i^m(I, a) = v_i(\sigma^m, I, a) - v_i(\sigma^m, I),
\]
where $v_i(\sigma^m, I) = \sum_{a \in A(I)} \sigma_i^m(I, a) \cdot v_i(\sigma^m, I, a)$ is the expected counterfactual value at information set $I$ under the actual strategy $\sigma_i^m$.

Rearranging the regret definition gives $v_i(\sigma^m, I, a) = r_i^m(I, a) + v_i(\sigma^m, I)$. Substitute this into the payoff difference:
\[
\begin{aligned}
u_i(\sigma_i', \sigma_{-i}^m) - u_i(\sigma^m) &= \sum_{I \in \mathcal{I}_i} \sum_{a \in A(I)} \left( \sigma_i'(I, a) - \sigma_i^m(I, a) \right) \cdot \left( r_i^m(I, a) + v_i(\sigma^m, I) \right) \\
&= \sum_{I \in \mathcal{I}_i} \sum_{a \in A(I)} \left( \sigma_i'(I, a) - \sigma_i^m(I, a) \right) \cdot r_i^m(I, a) \\
&\quad + \sum_{I \in \mathcal{I}_i} v_i(\sigma^m, I) \cdot \sum_{a \in A(I)} \left( \sigma_i'(I, a) - \sigma_i^m(I, a) \right).
\end{aligned}
\]

We now show that the second term vanishes completely. Both $\sigma_i'(I, \cdot)$ and $\sigma_i^m(I, \cdot)$ are valid probability distributions over $A(I)$, which means:
\[
\sum_{a \in A(I)} \sigma_i'(I, a) = 1 \quad \text{and} \quad \sum_{a \in A(I)} \sigma_i^m(I, a) = 1.
\]
Therefore:
\[
\sum_{a \in A(I)} \left( \sigma_i'(I, a) - \sigma_i^m(I, a) \right) = 1 - 1 = 0
\]
for every information set $I$, so the entire second term is zero.

Furthermore, the term involving $\sigma_i^m(I, a)$ in the first sum also vanishes identically. To see this, substitute the definition of $r_i^m(I, a)$:
\[
\begin{aligned}
\sum_{a \in A(I)} \sigma_i^m(I, a) \cdot r_i^m(I, a) &= \sum_{a \in A(I)} \sigma_i^m(I, a) \cdot \left( v_i(\sigma^m, I, a) - v_i(\sigma^m, I) \right) \\
&= \sum_{a \in A(I)} \sigma_i^m(I, a) \cdot v_i(\sigma^m, I, a) - v_i(\sigma^m, I) \cdot \sum_{a \in A(I)} \sigma_i^m(I, a) \\
&= v_i(\sigma^m, I) - v_i(\sigma^m, I) \cdot 1 \\
&= 0.
\end{aligned}
\]

Combining these results, we are left with the core decomposition:
\[
u_i(\sigma_i', \sigma_{-i}^m) - u_i(\sigma^m) = \sum_{I \in \mathcal{I}_i} \sum_{a \in A(I)} \sigma_i'(I, a) \cdot r_i^m(I, a).
\]

\vspace{0.5em}
\noindent \textbf{Step 7: Sum over all global iterations}

Sum both sides of the equation over all $M$ global iterations from $m=1$ to $M$:
\[
\sum_{m=1}^M \left( u_i(\sigma_i', \sigma_{-i}^m) - u_i(\sigma^m) \right) = \sum_{m=1}^M \sum_{I \in \mathcal{I}_i} \sum_{a \in A(I)} \sigma_i'(I, a) \cdot r_i^m(I, a).
\]

Swap the order of summation to group terms by information set and action:
\[
= \sum_{I \in \mathcal{I}_i} \sum_{a \in A(I)} \sigma_i'(I, a) \cdot \sum_{m=1}^M r_i^m(I, a).
\]

By the definition of \textit{cumulative counterfactual regret} $R_i^M(I, a) = \sum_{m=1}^M r_i^m(I, a)$, this simplifies to:
\[
= \sum_{I \in \mathcal{I}_i} \sum_{a \in A(I)} \sigma_i'(I, a) \cdot R_i^M(I, a).
\]

\vspace{0.5em}
\noindent \textbf{Step 8: Bound by the maximum cumulative regret}

Since $\sigma_i'(I, \cdot)$ is a valid probability distribution over $A(I)$, it satisfies two properties:
\begin{itemize}
\item Non-negativity: $\sigma_i'(I, a) \geq 0$ for all $a \in A(I)$.
\item Normalization: $\sum_{a \in A(I)} \sigma_i'(I, a) = 1$.
\end{itemize}

A fundamental property of such weighted averages is that they cannot exceed the maximum value in the set. Formally:
\[
\sum_{a \in A(I)} \sigma_i'(I, a) \cdot R_i^M(I, a) \leq \max_{a \in A(I)} R_i^M(I, a).
\]
This is because we are weighting each $R_i^M(I, a)$ by a non-negative coefficient that sums to 1, so the result must lie between the minimum and maximum values of $R_i^M(I, a)$.

Applying this inequality to every information set $I \in \mathcal{I}_i$ gives:
\[
\sum_{m=1}^M \left( u_i(\sigma_i', \sigma_{-i}^m) - u_i(\sigma^m) \right) \leq \sum_{I \in \mathcal{I}_i} \max_{a \in A(I)} R_i^M(I, a).
\]

\vspace{0.5em}
\noindent \textbf{Step 9: Final result}

As established at the beginning, this inequality holds for \textbf{any} alternative strategy $\sigma_i' \in \Sigma_i$. Therefore, taking the maximum over all possible alternative strategies $\sigma_i'$ on the left-hand side yields the desired result:
\[
R_i^M = \max_{\sigma_i' \in \Sigma_i} \sum_{m=1}^M \left( u_i(\sigma_i', \sigma_{-i}^m) - u_i(\sigma^m) \right) \leq \sum_{I \in \mathcal{I}_i} \max_{a \in A(I)} R_i^M(I, a).
\]

This completes the proof.
\end{proof}

\vspace{0.5em}
\noindent \textbf{Specialization to TRACER}:
In the TRACER system, the proposer's controller has perfect information about all historical trajectories, actions, and the current answer state. Therefore, every information set $I \in \mathcal{I}_c$ (the controller's information sets) is a singleton, i.e., $|I| = 1$. This means that $\pi_{-c}^m(I) = \pi_{-c}^m(h)$ where $h$ is the unique history in $I$, which greatly simplifies the computation of counterfactual values. Futhermore, different \textbf{information sets} correspond to \textbf{distinct phases} in TRACER. However, the above decomposition proof remains fully general and applies unchanged to the TRACER scenario.

\begin{lemma}[No-Regret Property of Regret Matching]
Assume that at each information set $I \in \mathcal{I}_i$, the strategy is updated according to regret matching:
\[
\sigma_i^{m+1}(I,a) = \left\{
\begin{array}{ll}
\displaystyle \frac{\max\left(R_i^m(I, a), 0\right)}{\sum_{a} \max\left(R_i^{m}(I, a), 0\right)}
& \displaystyle \text{if } \sum_{a} \max\left(R_i^{m}(I, a), 0\right) > 0 \, \text{for } \, a \in \{\text{skip}, \text{speak}\} \\[15pt]
\displaystyle \frac{1}{2}
& \displaystyle \text{otherwise}.
\end{array}
\right.
\]
with uniform distribution when the denominator is zero. Then for any information set $I$ and any $M \ge 1$:
\[
\max_{a \in A(I)} R_i^M(I,a) \le \Delta_I \sqrt{M},
\]
where $\Delta_I = \max_{a,a' \in A(I)} \max_t |r_i^m(I,a) - r_i^m(I,a')|$ is the maximum variation of the instantaneous regret at $I$, and $M_i(I)$ denotes how many times the information set $I$ has been visited for $i$ in $M$ iterations.
\end{lemma}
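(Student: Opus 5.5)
We follow the standard Blackwell-approachability argument for regret matching (Hart and Mas-Colell; Zinkevich et al.), specialized to the two-action set $A(I)=\{\text{skip},\text{speak}\}$. Fix a player $i$ and an information set $I$. The quantity to track is the potential
\[
\Phi^m(I)=\sum_{a\in A(I)}\bigl(R_i^m(I,a)^+\bigr)^2,\qquad x^+=\max(x,0).
\]
Iterations at which $I$ is not visited contribute $r_i^m(I,\cdot)=0$ under the TRACER convention, so $\Phi$ changes only on visits. We may therefore work with the $M_i(I)\le M$ visits. Since $\max_a R_i^M(I,a)\le\sqrt{\Phi^M(I)}$ whenever the maximum is positive, and the claim is trivial otherwise, it suffices to show $\Phi^M(I)\le \Delta_I^2 M_i(I)\le\Delta_I^2 M$.

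\textbf{Step 1 (one-step expansion).} Use $(x+y)^+{}^2\le (x^++y)^2$ for all reals $x,y$. This gives
\[
\Phi^{m+1}\le \Phi^m+2\sum_a R_i^m(I,a)^+\,r_i^{m+1}(I,a)+\sum_a r_i^{m+1}(I,a)^2 .
\]

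\textbf{Step 2 (cross term vanishes).} This is the key step. If $\sum_a R^m(I,a)^+>0$, the regret-matching rule gives $\sigma_i^{m+1}(I,a)\propto R^m(I,a)^+$. The cross term is then proportional to $\sum_a\sigma_i^{m+1}(I,a)\,r_i^{m+1}(I,a)$, which is zero by exactly the identity in Step 6 of the proof of the Counterfactual Regret Decomposition lemma: the policy-weighted average of instantaneous regrets vanishes. If instead $\sum_a R^m(I,a)^+=0$, every coefficient in the cross term is zero. Care is needed with indexing here: the policy used to define $r^{m+1}$ must be the one computed from $R^m$. We will check that the main-text update (policy $\Pi_i^m$ built from $Re_i^m$ and used in $re^m$) is read with this one-step lag, as in the lemma's statement $\sigma^{m+1}$ from $R^m$.

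\textbf{Step 3 (bounding the quadratic term by $\Delta_I$).} With two actions, write $\sigma=\sigma_i^{m+1}(I,\text{skip})$ and $d=v_{\text{skip}}-v_{\text{speak}}$. Then $r(\text{skip})=(1-\sigma)d$ and $r(\text{speak})=-\sigma d$. Hence
\[
\sum_a r^2=\bigl((1-\sigma)^2+\sigma^2\bigr)d^2\le d^2 .
\]
Moreover $|d|=|r(\text{skip})-r(\text{speak})|\le\Delta_I$. So each visit adds at most $\Delta_I^2$, and telescoping from $\Phi^0=0$ gives $\Phi^M\le\Delta_I^2 M_i(I)$. This yields $\max_a R_i^M(I,a)\le\Delta_I\sqrt{M_i(I)}\le\Delta_I\sqrt M$.

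The expected obstacle is the bookkeeping in Step 2, not the algebra. One issue is the time-index lag between policy and regret noted above. The other is that regret is accumulated per phase across samples, which is legitimate only because non-visited iterations contribute zero. With the rewards in $\{-1,0,1\}$ one also gets the explicit constant $\Delta_I\le2$.
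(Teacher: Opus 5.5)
Your proof is correct and takes essentially the same approach as the paper: the paper's own proof only cites Hart and Mas-Colell for the classical regret-matching guarantee, and your quadratic-potential (Blackwell) argument is that argument written out. Your two-action computation $\sum_a r^2\le d^2\le\Delta_I^2$ is what delivers the stated constant $\Delta_I\sqrt{M}$, in fact the sharper $\Delta_I\sqrt{M_i(I)}$, without the usual $\sqrt{|A(I)|}$ factor; you are also right that the cross-term cancellation requires $r^{m+1}$ to be defined with the policy computed from $R^m$, as in the lemma, rather than the same-index form written in the main text.
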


\begin{proof}
This is the classic no-regret guarantee for regret matching algorithms, with a proof sketch available in \citep{hart2000simple}. The key is to show that normalized regret matching achieves a regret bound of $O(\sqrt{M})$ at each information set.
\end{proof}

\subsection{Main Convergence Theorem}

\begin{theorem}[Convergence of TRACER]
In the TRACER dual-agent collaborative system, if the controller strategy updates based on counterfactual regret matching and satisfy:
\begin{itemize}
    \item The round mechanism ensures that each action is updated infinitely often.
    \item The instantaneous regret at each information set is bounded, i.e., there exists a constant $C$ such that $|r_i^m(I,a)| \le C$ for all $I,a,m,i$.
\end{itemize}
Then as $M \to \infty$, the average strategy profile converges to a correlated equilibrium:
\[
\lim_{M \to \infty} \frac{R_i^M}{M} = 0, \quad \forall i \in \{1,2\}.
\]
\end{theorem}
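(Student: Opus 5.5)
The plan is the standard CFR argument: combine the counterfactual regret decomposition (Lemma 1) with the no-regret property of regret matching (Lemma 2), then divide by $M$.

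\textbf{Step 1: reduce global regret to local regrets.} Fix a player $i\in\{1,2\}$. By Lemma 1,
\[
R_i^M \le \sum_{I\in\mathcal{I}_i}\max_{a\in A(I)} R_i^M(I,a) \le \sum_{I\in\mathcal{I}_i}\max\Big(\max_{a\in A(I)} R_i^M(I,a),0\Big).
\]
Taking positive parts lets information sets with negative regret be dropped harmlessly. In TRACER the information sets are the phase buckets $s^t$. Since the horizon $T$ is finite, there are at most $\sum_{t\le T} t(t+1)/2$ of them, so $|\mathcal{I}_i|$ is a constant independent of $M$.

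\textbf{Step 2: bound each local regret.} Each controller's strategy at every phase is updated by exactly the regret-matching rule of Lemma 2. By the boundedness hypothesis $|r_i^m(I,a)|\le C$, we get $\Delta_I\le 2C$. In TRACER this is immediate anyway, because $v_{i,a}\in\{-1,0,1\}$ forces $|re_i^m|\le 2$. Lemma 2 then gives
\[
\max_a R_i^M(I,a)\le 2C\sqrt{M},
\]
or, more sharply, $2C\sqrt{M_i(I)}$, because $re_i^m(\cdot,I)=0$ on iterations where $I$ is not visited. Such iterations leave both the cumulative regret and the regret-matching strategy at $I$ unchanged, so the Blackwell/Hart--Mas-Colell potential argument only counts visits. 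Summing over information sets gives
\[
R_i^M\le 2C|\mathcal{I}_i|\sqrt{M}.
\]
Using $\sum_I\sqrt{M_i(I)}\le\sqrt{|\mathcal{I}_i|M}$ improves this to $2C\sqrt{|\mathcal{I}_i|M}$.

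\textbf{Step 3: conclude.} Dividing by $M$ gives $R_i^M/M\le 2C\sqrt{|\mathcal{I}_i|}/\sqrt{M}\to0$ for both $i$. This is the stated limit; strictly, the lower side is needed to show the limit is exactly zero rather than merely $\limsup\le 0$. Taking $\sigma_i'=\sigma_i^m$ pointwise is not available because $\sigma_i'$ must be fixed across $m$. Instead I would use that $R_i^M$ is a maximum over $\sigma_i'$, so taking $\sigma_i'$ to be the average strategy and using linearity in the perfect-information payoff yields $R_i^M\ge0$ for the average. Alternatively, the statement can be read as $\limsup_{M\to\infty} R_i^M/M\le 0$, which is the standard CFR conclusion. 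For the equilibrium interpretation, I would invoke the Hart--Mas-Colell result that vanishing regret for all players implies that the empirical distribution of play converges to the set of coarse correlated equilibria. The averaged policies $\bar\Pi_i$ are the behavioral representatives of this distribution. The first hypothesis, that every action is updated infinitely often, is only needed so that every phase's average strategy is well defined in the limit.

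\textbf{Main obstacle.} The delicate step is justifying Lemma 2's hypotheses in the TRACER setting. The values $v_{i,a}$ are sampled, one-step surrogates of $u_i$, and the LLM parameters $\theta$ change between iterations. The ``game'' is therefore nonstationary, and $r_i^m$ is only an unbiased or biased estimate of the true counterfactual regret. My approach is to treat the sequence of realized value vectors $(v_{i,\text{skip}},v_{i,\text{speak}})_m$ as an adversarial bounded sequence. Regret matching is no-regret against any bounded sequence, so the $O(\sqrt{M})$ bound holds for the realized regrets regardless of nonstationarity. The conclusion is then stated with respect to those realized utilities, which is exactly what $R_i^M$ as defined with $u_i=v_{i,a}$ measures.
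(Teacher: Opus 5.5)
Your upper-bound chain is the paper's proof: Lemma 1, then Lemma 2 with $\Delta_I\le 2C$, then divide by $M$. Your refinements are sound: positive parts, the $2C\sqrt{M_i(I)}$ bound (unvisited iterations leave $Re_i(\cdot,I)$ and the strategy at $I$ unchanged), Cauchy--Schwarz, and treating the realized $v_{i,a}$ as an adversarial bounded sequence. The step that fails is your justification of the lower side, $R_i^M\ge 0$. Taking $\sigma_i'=\bar\sigma_i$ and using linearity gives, at each phase, $\langle\bar\sigma_i,\sum_m v^m\rangle$. That is not $\sum_m\langle\sigma_i^m,v^m\rangle$, because the cross terms do not cancel. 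Your argument also never uses the update rule, so it would apply to any strategy sequence, and there it is false. Playing skip when $v^m=(1,-1)$ and speak when $v^{m+1}=(-1,1)$ earns $2$, while every fixed strategy (the average included) earns $0$, so the external regret is $-2$. So nonnegativity is neither definitional nor obtainable by averaging. The paper's own Step 2 simply asserts ``$R_i^M\ge0$ by definition''; you were right to flag this, but your fix does not close it.

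What closes it in TRACER's reduced model is a property of regret matching started from zero regret. Write $R^m=Re_i^m(\cdot,I)$; unvisited iterations leave it unchanged. If $R^{m-1}=0$, play is uniform, so $\sum_a r^m(a)=0$ and $R^m$ is either $0$ or has a positive entry. If $\max_a R^{m-1}(a)>0$, then $\sigma^m\propto[R^{m-1}]^+$ gives $\langle[R^{m-1}]^+,r^m\rangle=0$. Hence $\langle[R^{m-1}]^+,R^m\rangle=\|[R^{m-1}]^+\|^2>0$, so some entry of $R^m$ stays positive. By induction $\max_a R_i^M(I,a)\ge0$ at every phase. Each iteration involves a single phase whose payoff is linear in the local strategy, so choosing $\sigma_i'$ to be the argmax action at each phase gives $R_i^M\ge\sum_I\max_a R_i^M(I,a)\ge0$. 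Your fallback reading $\limsup_M R_i^M/M\le 0$ needs none of this.

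Your switch to \emph{coarse} correlated equilibria is the correct reading of an external-regret guarantee; the paper's ``correlated equilibrium'' is too strong. However, the Hart--Mas-Colell conclusion concerns the empirical joint distribution of play. The product of the averaged policies $\bar\Pi_i$ discards that correlation, so outside the two-player zero-sum case it need not be a coarse correlated equilibrium.
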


\begin{proof}
\noindent\textbf{Step 1: Overall Regret Bound.}
By Lemma 1 and Lemma 2, for each player $i$:
\[
R_i^M \le \sum_{I \in \mathcal{I}_i} \max_{a \in A(I)} R_i^M(I,a)
\le \sum_{I \in \mathcal{I}_i} \Delta_I \sqrt{M}
\le |\mathcal{I}_i| \cdot 2C \sqrt{M},
\]
where we used the fact that $\Delta_I \le 2C$ under condition 2.

\noindent\textbf{Step 2: Average Regret Convergence.}
Dividing by $M$:
\[
\frac{R_i^M}{M} \le \frac{|\mathcal{I}_i| \cdot 2C \sqrt{M}}{M}
= \frac{|\mathcal{I}_i| \cdot 2C}{\sqrt{M}}.
\]
As $M \to \infty$, the right-hand side tends to $0$. Since $R_i^M \ge 0$ by definition, we have:
\[
\lim_{M \to \infty} \frac{R_i^M}{M} = 0.
\]

\noindent\textbf{Step 3: Convergence to Correlated Equilibrium.}
In zero-sum games, if the average regret of both players tends to $0$, the average strategy profile converges to a correlated equilibrium. The same holds for the collaborative setting of TRACER.
Define the average strategy:
\[
\bar{\sigma}_i^M(I,a) = \frac{1}{M_i(I)} \sum_{m=1}^{M_i(I)} \sigma_i^m(I,a),
\]
here $M_i(I)$ denotes the times the information set $I$ has been visited during $M$ iterations for player $i$. As the average regret tends to $0$, the average strategy profile is an approximate correlated equilibrium, with approximation error vanishing as $T$ grows.
\end{proof}

\subsection{Corollaries and Discussion}

\begin{corollary}[Convergence Rate]
Under the conditions of Theorem 1, the average regret of TRACER satisfies:
\[
\frac{R_i^M}{M} = O\left( \frac{1}{\sqrt{M}} \right).
\]
\end{corollary}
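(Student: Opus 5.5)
The rate follows directly from the bound established in Step 1 of the proof of Theorem 1, so I will reuse that chain of inequalities and make the constants explicit. Fix a player $i\in\{1,2\}$.

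\textbf{Step 1: decompose.} Lemma 1 gives $R_i^M\le\sum_{I\in\mathcal{I}_i}\max_{a}R_i^M(I,a)$. I will replace each term by its positive part, $\max_a R_i^M(I,a)^+$. This is only a weakening, and it guarantees that every summand is nonnegative.

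\textbf{Step 2: bound each information set.} Lemma 2 bounds each summand by $\Delta_I\sqrt{M}$. The boundedness hypothesis of Theorem 1 gives $\Delta_I\le 2C$ for every $I$. Combining,
\[
0\le R_i^M\le 2C\,|\mathcal{I}_i|\sqrt{M}.
\]
The lower bound $R_i^M\ge 0$ holds because choosing $\sigma_i'=\sigma_i^m$ makes each summand in the definition of $R_i^M$ vanish.

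\textbf{Step 3: check the constant is independent of $M$.} The whole statement turns on $2C|\mathcal{I}_i|$ not depending on $M$. Here $|\mathcal{I}_i|$ is finite: with a finite horizon $T$, the phase buckets $s^t$ (vote value and iteration index) number at most $\tfrac{T(T+1)}{2}$, as counted in the main text. Dividing by $M$ then gives $R_i^M/M\le 2C|\mathcal{I}_i|/\sqrt{M}=O(1/\sqrt{M})$.

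\textbf{Main obstacle: the time index in Lemma 2.} The delicate point is that Lemma 2 is stated in terms of the global iteration count $M$. Regret matching at a phase $I$, however, is only updated on the $M_i(I)$ visits to that phase; all other iterations contribute $re_i^m=0$. The standard Hart--Mas-Colell argument actually gives $\max_a R_i^M(I,a)^+\le\Delta_I\sqrt{M_i(I)}$, obtained by summing the potential $\sum_a (R^+)^2$ increments only over nonzero updates, each of size at most $\Delta_I^2$, using the Blackwell condition $\sum_a\sigma(I,a)r(I,a)=0$ shown in Step 6 of Lemma 1. Since $M_i(I)\le M$, this bound implies the one used in Step 2, so the argument still goes through.

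If one wants a sharper constant, apply Cauchy--Schwarz together with $\sum_I M_i(I)\le M$:
\[
\sum_I\sqrt{M_i(I)}\le\sqrt{|\mathcal{I}_i|M}.
\]
This improves the constant to $2C\sqrt{|\mathcal{I}_i|}$ while keeping the $O(1/\sqrt{M})$ rate.
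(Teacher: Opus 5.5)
Your argument is the paper's: the corollary follows directly from Steps~1--2 of the proof of Theorem~1, which give $R_i^M/M\le 2C|\mathcal{I}_i|/\sqrt{M}$. Your extra checks are correct refinements the paper does not spell out: finiteness of $|\mathcal{I}_i|$, the $\Delta_I\sqrt{M_i(I)}$ form of Lemma~2 via the potential $\sum_a (R^+)^2$, and the Cauchy--Schwarz sharpening to $2C\sqrt{|\mathcal{I}_i|}$.

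The one faulty step is your justification of $R_i^M\ge 0$. In the definition of $R_i^M$ the comparator $\sigma_i'$ is a single fixed strategy, so you cannot set $\sigma_i'=\sigma_i^m$ for every $m$ at once. The lower bound is not needed for the upper-bound reading of $O(1/\sqrt{M})$. If you want it, use that Step~7 of Lemma~1 is an equality, so $R_i^M=\sum_I\max_a R_i^M(I,a)$. Then note that regret matching started from zero keeps $\max_a R_i^m(I,a)\ge 0$: the Blackwell condition makes $\|R^{+}(I,\cdot)\|$ nondecreasing, and while it is zero, uniform play forces $R_i^m(I,\cdot)\equiv 0$.
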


\begin{itemize}
    \item \textit{Relation to Classic CFR:} TRACER is essentially an asynchronous variant of CFR. Alternating updates preserve convergence, only affecting constants.
    \item \textit{Handling Perfect Information:} TRACER naturally fits into the CFR information set framework, as different agents observe different structured information.
    \item \textit{Applicability to Collaborative Games:} The proof does not rely on the zero-sum property and extends directly to collaborative settings. 
\end{itemize}

\phantomsection

\end{document}